\newtheorem{lemma}{Lemma}
\newlength\savewidth\newcommand\shline{\noalign{\global\savewidth\arrayrulewidth
  \global\arrayrulewidth 1pt}\hline\noalign{\global\arrayrulewidth\savewidth}}
\newcommand{\tablestyle}[2]{\setlength{\tabcolsep}{#1}\renewcommand{\arraystretch}{#2}\centering\footnotesize}
\renewcommand{\paragraph}[1]{\vspace{1.25mm}\noindent\textbf{#1}}
\definecolor{cvprblue}{rgb}{0.21,0.49,0.74}
\title{RNN as Linear Transformer: A Closer Investigation into Representational Potentials of Visual Mamba Models}
\author{Timing Yang$^1$ \quad Guoyizhe Wei$^1$ \quad Alan Yuille$^1$ \quad Feng Wang$^{1*}$\\
\\
$^1$Johns Hopkins University
}
\begin{document}

\twocolumn[{%
\renewcommand\twocolumn[1][]{#1}%
\maketitle
\vspace{-1cm}
\begin{center}
    \captionsetup{type=figure, width=1\textwidth}
    \includegraphics[width=1\textwidth]{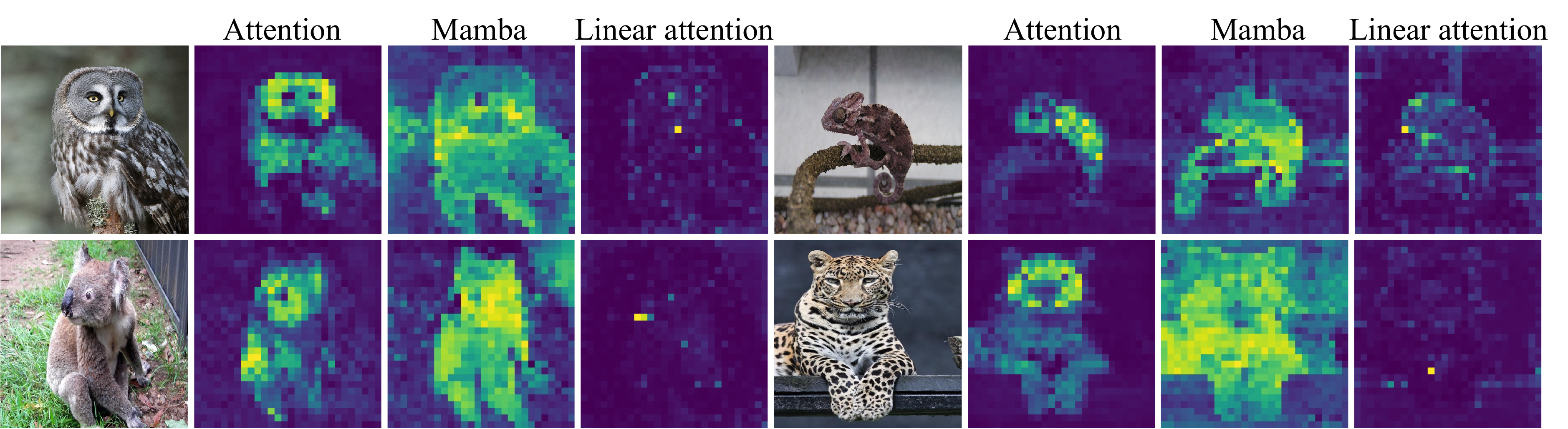}
    
\captionof{figure}{\textbf{Activation maps of Self-Attention~\cite{attention}, Mamba~\cite{vim}, and Linear Attention~\cite{linearattn}.} As shown, Self-Attention typically produces high-quality activations with clear foreground-background distinction, while Mamba shows similar patterns but noisier background activations. Linear Attention, in contrast, often struggles to clearly focus on the informative parts of the images.}
    
    \label{fig:Vis-attn-linearattn-mamba}
\end{center}%
}]

\maketitle
\begin{abstract}

\def\thefootnote{*}\footnotetext{Corresponding author. Email: \url{wangf3014@gmail.com}. Code is available at \url{https://github.com/yangtiming/Dino-Mamba}.}

\vspace{-0.2cm}
Mamba has recently garnered attention as an effective backbone for vision tasks. However, its underlying mechanism in visual domains remains poorly understood. In this work, we systematically investigate Mamba’s representational properties and make three primary contributions. First, we theoretically analyze Mamba’s relationship to Softmax and Linear Attention, confirming that it can be viewed as a low-rank approximation of Softmax Attention and thereby bridging the representational gap between Softmax and Linear forms. Second, we introduce a novel binary segmentation metric for activation map evaluation, extending qualitative assessments to a quantitative measure that demonstrates Mamba’s capacity to model long-range dependencies. Third, by leveraging DINO for self-supervised pretraining, we obtain clearer activation maps than those produced by standard supervised approaches, highlighting Mamba’s potential for interpretability. Notably, our model also achieves a 78.5\% linear probing accuracy on ImageNet, underscoring its strong performance. We hope this work can provide valuable insights for future investigations of Mamba-based vision architectures.

\end{abstract}  
\section{Introduction}
\label{sec:intro}

Transformers have made impressive strides from natural language processing (NLP) to vision tasks, thanks to their ability to model global dependencies. However, their quadratic time and memory complexity poses significant challenges in scaling to high-resolution images. This limitation spurs interest in alternative sequence models such as State Space Models (SSMs) \cite{ssm1,gu2021combining,hippo,mehta2022long}, which capture long-range dependencies with linear complexity. Among these alternatives, the Mamba \cite{mamba} model emerges as a particularly promising approach by introducing selective scanning techniques and hardware-aware optimizations that enable efficient training and inference. Its next iteration \cite{mamba2} further incorporates State Space Duality (SSD), allowing larger state dimensions and faster training times. Following the trajectory of transformers \cite{attention}’ expansion from NLP to vision, Mamba is successfully applied to a range of vision tasks, including neural architecture design \cite{vim,vmamba,mamba-r,wang2024causal}, semantic segmentation and object detection \cite{yolomamba,rs3mamba}, and medical image analysis \cite{U-mamba,vim4path}.

While transformers have been extensively studied and their global modeling capabilities are well established, the ability of the relatively novel Mamba \cite{mamba,mamba2} architecture to capture long-range dependencies with a linear mechanism remains in question. To address this concern, we theoretically analyze Mamba’s representational capacity in comparison to self-attention and linear attention. First, we unify the formulations of Mamba \cite{mamba2}, self-attention \cite{attention}, and linear attention \cite{linearattn} into a common mathematical framework for direct comparison of their expressiveness via matrix rank analysis. In self-attention \cite{attention}, applying a softmax to the $QK^T$ attention matrix makes its rank effectively full (up to the sequence length limit). In contrast, linear attention \cite{linearattn} uses a kernel-based mechanism in place of softmax, restricting the attention matrix’s rank to the dimensionality of the query/key projections (the head dimension). This reduction in rank yields lower computational complexity at the cost of representational power. Mamba \cite{mamba2}, though structurally similar to linear attention, introduces a learnable, data-dependent causal mask in place of the fixed mask, thereby significantly enhancing its expressiveness. This learned transformation preserves a high rank in Mamba’s state-space matrices, enabling it to capture richer long-range dependencies while maintaining linear efficiency. Consequently, Mamba achieves a balance between representational capacity and efficiency, effectively positioning itself between self-attention and linear attention in terms of capability. 

Next, to validate these theoretical findings, we analyze Mamba’s feature representations in practice through visualization. Visualizing feature maps provides an intuitive means to understand what each model encodes and how it processes information. In particular, we leverage DINO \cite{Dino}—a self-supervised vision transformer known for producing clear and interpretable feature maps — to compare the representations learned by Mamba, self-attention, and linear attention. As illustrated in Figure \ref{fig:Vis-attn-linearattn-mamba}, self-attention yields the most distinct separation between foreground and background regions; Mamba exhibits similar global attention patterns albeit with noisier background activations; and linear attention often struggles to clearly delineate object boundaries. These empirical observations corroborate our rank analysis, indicating that Mamba’s higher-rank representations do capture broad context, even if some background noise remains in its activations.

While such visual comparisons are informative, evaluating feature quality requires quantitative rigor. However, existing evaluation methods \cite{Visiontransformersneedregisters,Dino,dinov2} largely rely on subjective inspection of feature maps, lacking objective metrics for comparison. To address this gap, we propose the Binary-AUC Metric, a new AUC-based evaluation that quantitatively measures feature map discriminability.  This metric computes the area under the ROC curve (AUC) by comparing model-generated feature maps against ground-truth segmentation masks, providing an objective measure of representational quality. Using this metric, we find that self-attention produces the most discriminative feature maps, followed by Mamba, with linear attention performing the poorest. Notably, feature maps from self-supervised training (DINO) are clearer than those from supervised training, and their quality further improves with larger model size and depth. 

Finally, we integrate Mamba into the DINO self-supervised framework to assess its high-level performance on vision tasks. In a linear probing evaluation, Mamba achieves classification accuracy on par with transformer-based models. More importantly, on downstream tasks that benefit from long-sequence or high-resolution data, Mamba significantly outperforms Linear Attention across detection and segmentation tasks, demonstrating its superior ability to model long-range dependencies while maintaining linear complexity. These findings underscore Mamba’s potential as a robust, efficient, and effective alternative to transformers in self-supervised vision applications. 

Our contributions can be summarized as follows:

\begin{itemize}  
    \item We theoretically analyze Mamba via its token-mixing properties and show that it can be viewed as a low-rank approximation of Softmax Attention. Although this indicates a weaker representational capacity compared to Softmax Attention, Mamba offers greater potential and scalability than Linear Attention, making it an attractive middle ground for efficient vision architectures.
    
    \item We introduce a binary segmentation metric designed to quantitatively assess the interpretability of activation maps. By applying this metric, we are able to rigorously evaluate how Mamba captures long-range dependencies, moving beyond purely qualitative visualization methods to a more objective, measurement-driven analysis.
    
    \item We leverage DINO for self-supervised pretraining, yielding  clearer activation maps than those obtained via standard supervised training, by which our Vision Mamba model achieves a competitive 78.5\% linear probing accuracy on ImageNet, underscoring both its representational potential and practical efficacy in vision tasks.
    
\end{itemize}

\section{Related Work}
\label{sec:Related Work}

\paragraph{State space models.} State Space Models (SSMs), originally used for continuous inputs in control systems~\citep{ssm-control}, have gained traction in deep learning through recent discretization advances~\citep{ssm-dis1,hippo,ssm-dis2,ssm-dis3}. SSMs encompass recurrent models with latent states, from traditional Hidden Markov Models~\citep{hmm} and RNNs to modern variants: Linear Attention~\citep{linearattn} achieves linear complexity via softmax-free mechanisms, while RWKV~\citep{rwkv} combines Transformer training efficiency with RNN inference. Mamba~\cite{mamba} introduces data-dependent selective SSMs with hardware optimization, further enhanced by Mamba-2~\cite{mamba2} through State Space Duality (SSD) for improved scalability.  Mamba's advent has prompted substantial research~\cite{han2024demystify,chou2024metala,ali2025hidden,han2024bridging,yang2024parallelizing} exploring relationships between attention mechanisms (softmax and linear) and state space models(\eg mamba). However, most of these studies either analyze Mamba through the lens of gating mechanisms or reformulate it as a variant of attention mechanisms. None of these works analyze Mamba from its own perspective; instead, they merely bring Mamba closer to the attention framework. For instance, MILA~\cite{han2024demystify} links Mamba to linear attention, attributing its success to the forget gate, while RALA~\cite{fan2024rala} addresses low-rank limitations in linear attention through rank augmentation. Unlike prior work that interprets Mamba through attention mechanisms, we analyze Mamba from its own perspective to assess its representational capacity. This intrinsic analysis provides the foundation for our feature map visualizations and experimental validation.

\paragraph{Mamba in visual application.} Mamba is extended to the field of computer vision~\cite{mambavision, mamba-r, vim, li2024videomamba, yolomamba, log-vmamba, patchscaling,mvar,zhao2025ud,arm,adventurer}, analogous to the successful adaptation of Transformers from NLP to vision-related tasks. For instance, Vim \cite{vim} is the first models in vision tasks to adopt a bidirectional forward and backward scanning approach to resolve the issue where Mamba's inherent directionality limits learning from preceding tokens. Vmamba \cite{vmamba}, viewed as a model combining convolution with Mamba blocks, adopts a hybrid structure by stacking Visual State-Space blocks and the 2D Selective Scan module to capture contextual information from multiple directions and dimensions. Mamba-Reg \cite{mamba-r} similarly inserts multiple register tokens \cite{Visiontransformersneedregisters} to remove outliers and enhance performance. While previous Mamba architectures focus on model design, we explore Visual Mamba's underlying mechanisms through feature map visualization.

\paragraph{Self-supervised learning} for visual representations becomes an influential approach to utilizing unlabeled data for learning robust and transferable features, including context based \cite{rotation,Jigsaw,Colorization}, contrastive learning \cite{SimCLR,SwAV,BYOL,Dino,dinov2}, and masked image modeling \cite{mae,simmim,beit}, etc. Compared to supervised learning, self-supervised learning with DINO \cite{Dino} demonstrates the ability to produce exceptionally clean feature maps. Both \cite{Visiontransformersneedregisters,mamba-r} explore the impact of introducing register tokens in Vision, showing that these tokens effectively denoise feature maps. However, both works lack quantitative feature map evaluation. While existing CNN studies~\cite{FQuality1,FQuality2,FQuality3} use regression and entropy metrics, our CLS-token-based Binary-AUC approach is the first ImageNet-scale assessment, providing task-relevant insights into Mamba's mechanisms.

\section{Theoretical Analysis }
\label{sec:TheoreticalAnalysis}

\subsection{Basic Formulations}

\paragraph{Softmax self-attention} captures content-based interactions within a sequence. Given an input \( \mathcal{X} \in \mathbb{R}^{L \times d_{\text{model}}} \), the query, key, and value matrices are computed as
\begin{equation}
\mathbf{Q} = \mathcal{X} W^Q,\quad \mathbf{K} = \mathcal{X} W^K,\quad \mathbf{V} = \mathcal{X} W^V,
\end{equation}
where \( \mathbf{Q}, \mathbf{K} \in \mathbb{R}^{L \times D_{QK}} \) and \( \mathbf{V} \in \mathbb{R}^{L \times J} \), and the self-attention output is given by
\begin{equation}
\mathbf{Y} = \mathrm{softmax}\left(\mathbf{Q}\mathbf{K}^\top/\sqrt{D_{QK}}\right)\mathbf{V}.
\label{eq:self-attn}
\end{equation}
The softmax function normalizes each row of $\mathbf{Z}$ as $\mathrm{softmax}(\mathbf{Z})_{i,j} = \exp(Z_{i,j}) / \sum_{k=1}^{L} \exp(Z_{i,k})$, ensuring $Y_i$ is a weighted sum of $\mathbf{V}$ based on query-key similarities, with complexity $O(L^2)$.

\paragraph{Linear attention} reduces the \(O(L^2)\) complexity of self-attention, by replacing the \(\mathrm{softmax}(\mathbf{QK}^\top)\) term with a kernel-based factorization \(\psi(\mathbf{Q}) \psi(\mathbf{K})^\top\), leading to
\begin{equation}  
\mathbf{Y} = (\mathbf{\psi(Q)\psi(K)}^\top) \mathbf{V}, 
\end{equation}
where \(\psi(\cdot)\) projects queries and keys into a feature space, approximating softmax attention while reducing complexity to \(O(L)\). 
For autoregressive settings, causality is enforced via a causal mask \(\mathbf {L_{Attn}} \in \mathbb{R}^{L \times L}\), leading to
\begin{equation}
\label{causallinearattn}
\mathbf{Y} = (\mathbf {L_{Attn}} \circ \mathbf{\psi(Q)\psi(K)}^\top) \mathbf{V},
\end{equation}
where \(\circ\) denotes element-wise multiplication, and \(\mathbf{L_{Attn}}\) is a lower triangular matrix with ones in its lower triangular part and zeros elsewhere, ensuring that each token only attends to previous tokens.

\paragraph{Mamba.} The earlier version of Mamba \cite{mamba} utilizes the structure of SSMs, which maps a one-dimensional input \( x \in \mathbb{R}^{\mathrm{L}} \) to an output \( y \in \mathbb{R}^{\mathrm{L}} \) via a hidden state \( h_t \in \mathbb{R}^N \), where $\mathrm{L}$ is sequence length and $N$ is state dimension. The process can be expressed as:
\begin{equation} \label{discrete-ssm}
\left\{
\begin{array}{l}
h_t = \boldsymbol{\mathcal{A}}_t h_{t-1} + \mathbf{B}_t x_t, \\
y_t = \mathbf{C}^{\top}_t h_t.
\end{array}
\right.
\end{equation}
Since the parameters of an SSM model evolve over time, we can consider \(\boldsymbol{\mathcal{A}} \in \mathbb{R}^{L \times N \times N}\), \(\mathbf{B} \in \mathbb{R}^{L \times N}\) and \(\mathbf{C} \in \mathbb{R}^{L \times N}\).

The SSM can be implemented in a convolution form as follows. Starting with the initial state \( h_0 = B_0 x_0 \), the recursive state-space update equation can be expanded as
\begin{equation}
h_t = \sum_{i=0}^{t} \mathcal{A}_t \mathcal{A}_{t-1} \dots \mathcal{A}_{i+1} B_i x_i
\end{equation}
and the output at time step \( t \) is  
\begin{equation}
\label{conv-ssmsformat}
y_t = \sum_{i=0}^{t} C_t^\top \mathcal{A}^{\times}_{t:i} B_i x_i.
\end{equation}
This equation can be written in a genral form of
\begin{equation}
\label{SSMs-simple}
\mathbf{Y} = \text{SSM}(\boldsymbol{\mathcal{A}}, \mathbf{B}, \mathbf{C})(\mathbf{X})
\end{equation}
where \(\mathbf{Y}, \mathbf{X} \in \mathbb{R}^{L \times J}\), and \(\mathbf{A} \in \mathbb{R}^{L}\) following Mamba-2's formulation~\cite{mamba2}. Equation (\ref{SSMs-simple}) can be then written in a matrix transformation
\begin{equation}
\label{mamba-f}
\mathbf{Y} = \mathbf{M}\mathbf{X}, 
\mathbf{M}_{ij} =
\left\{
\begin{array}{ll}
\mathbf{C}_i^\top \mathbf{A}_i \cdots \mathbf{A}_{j+1} \mathbf{B}_j, & i \geq j, \\
0, & i < j,
\end{array}
\right.
\end{equation}
where \(\mathbf{A}_i\) is a scalar, and \(\mathbf{M}\) can be represented by
\begin{equation}
\label{eq:L-causal_m}
\mathbf{M}_{ij} = \mathbf {L_{M}}_{ij} \circ (\mathbf{C}_i^\top \mathbf{B}_j), \:
\mathbf {L_{M}}_{ij} = \mathbf{A}_i \cdots \mathbf{A}_{j+1},
\end{equation}
where \( \mathbf {L_{M}} \in \mathbb{R}^{L \times L} \) acts as a learnable causal mask controlling the impact of the state transition matrix.

By this point, we have found that softmax self-attention, linear attention, and Mamba can all be formulated using \textbf{\textit{the same unified expression}} of
\begin{equation}
    \mathbf{Y} = \mathbf{M}\mathbf{X},
\end{equation}
where
\begin{equation}
\label{Unified-f}
\mathbf{M} =
\left\{
\begin{array}{l@{\quad}l}
\mathbf{L_M} \circ (\mathbf{C}^\top \mathbf{B}), & \text{Mamba} \\
\mathbf{L_{Attn}} \circ \text{softmax}(\mathbf{Q} \mathbf{K}^\top), & \text{Self-Attn} \\
\mathbf{L_{Attn}} \circ (\psi(\mathbf{Q}) \psi(\mathbf{K})^\top), & \text{Lin-Attn}
\end{array}
\right.
\end{equation}
represents how the three distinct token mixers process there input $\mathbf{X}$ into output $\mathbf{Y}$ (shown in Figure~\ref{fig:Lowertriangularmatrix}. Note that here we keep the causal attention mask \(\mathbf{L_{Attn}}\) for softmax self-attention and linear attention to guarantee their $\mathbf{M}$ being a lower-triangle matrix. In the following analysis, we will divide $\mathbf{M}$ into sub-matrices and the causal mask does not affect the rank of off-diagonal blocks.

\begin{figure}[h!]
    \centering
    \vspace{+0.2cm}
    \includegraphics[width=1.0\linewidth]{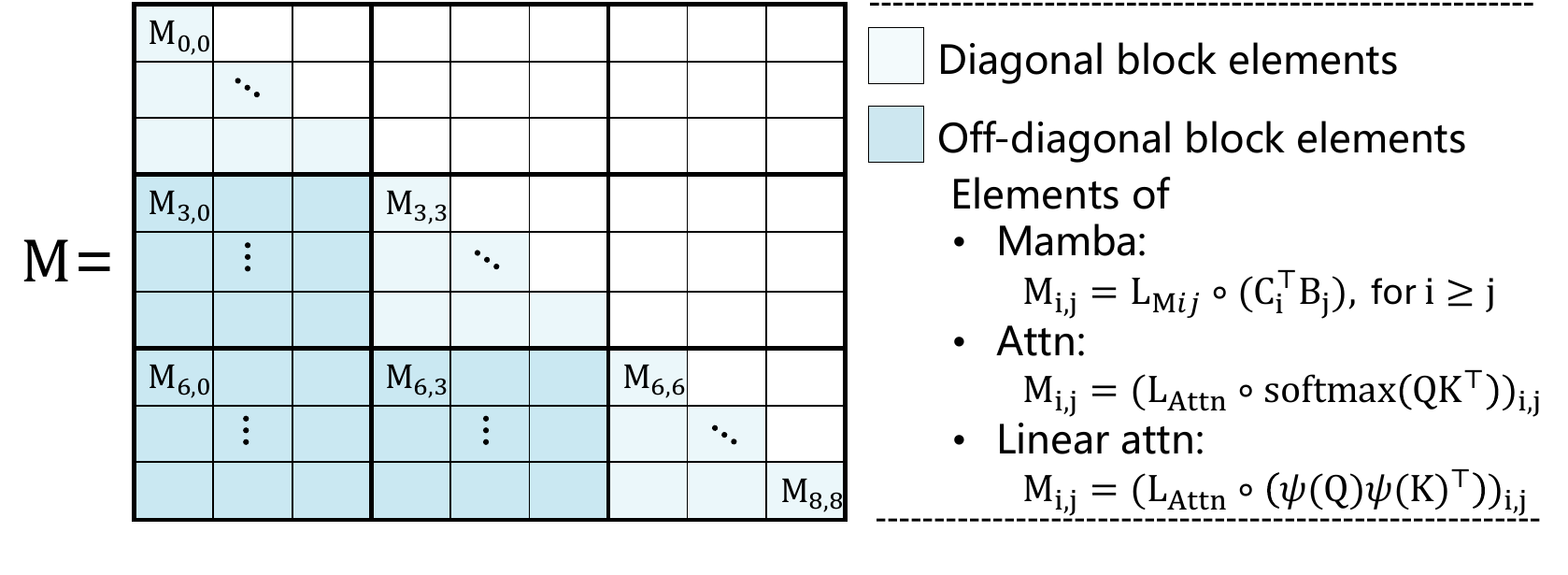}
    \vspace{-0.8cm}
    \caption{\textbf{Unified formulation} of the lower triangular matrix \( \textbf{M} \) with diagonal and off-diagonal block elements. This example is for a sequence length of nine and a chunk size of three.}
    \label{fig:Lowertriangularmatrix}
\end{figure}

\subsection{Hadamard Product Rank Bounds}

The following lemma establishes rank bounds for the Hadamard (element-wise) product of matrices, which is crucial for analyzing the structure of $\mathbf{M}$ in our unified formulation. This bound is a standard result in matrix analysis. \textbf{Note}: \textit{The following lemma uses independent abstract notations for clarity.}
\begin{lemma}[Hadamard Product Rank Bound]
\label{lem:hadamard_rank}
For two matrices $\mathbf{A}$ and $\mathbf{B}$ of the same dimensions, the Hadamard product $\mathbf{A} \circ \mathbf{B}$ (defined element-wise as $(\mathbf{A} \circ \mathbf{B})_{i,j} = \mathbf{A}_{i,j}\mathbf{B}_{i,j}$) satisfies:
\[
\operatorname{rank}(\mathbf{A} \circ \mathbf{B}) \le \operatorname{rank}(\mathbf{A}) \cdot \operatorname{rank}(\mathbf{B}),
\]
\end{lemma}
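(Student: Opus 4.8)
The plan is to reduce the statement to two elementary facts: the Hadamard product is bilinear, and the Hadamard product of two rank-one matrices is again rank-one (or zero). Combining these with the subadditivity of rank yields the bound immediately. First I would write each factor in a rank decomposition as a sum of outer products. Setting $r = \operatorname{rank}(\mathbf{A})$ and $s = \operatorname{rank}(\mathbf{B})$, I would express $\mathbf{A} = \sum_{p=1}^{r} \mathbf{u}_p \mathbf{v}_p^\top$ and $\mathbf{B} = \sum_{q=1}^{s} \mathbf{x}_q \mathbf{y}_q^\top$ for suitable column vectors $\mathbf{u}_p, \mathbf{x}_q$ and $\mathbf{v}_p, \mathbf{y}_q$; such decompositions exist by the very definition of matrix rank, so no extra hypotheses on $\mathbf{A}$ or $\mathbf{B}$ are required.

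Next, using bilinearity of the Hadamard product, I would expand the product over both sums:
\begin{equation}
\mathbf{A} \circ \mathbf{B} = \sum_{p=1}^{r} \sum_{q=1}^{s} (\mathbf{u}_p \mathbf{v}_p^\top) \circ (\mathbf{x}_q \mathbf{y}_q^\top).
\end{equation}
The central algebraic step is the entrywise identity
\begin{equation}
(\mathbf{u}_p \mathbf{v}_p^\top) \circ (\mathbf{x}_q \mathbf{y}_q^\top) = (\mathbf{u}_p \circ \mathbf{x}_q)(\mathbf{v}_p \circ \mathbf{y}_q)^\top,
\end{equation}
which I would verify by comparing $(i,j)$ entries: both sides equal $(u_p)_i (x_q)_i (v_p)_j (y_q)_j$. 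This exhibits each of the $rs$ summands as a single outer product, hence a matrix of rank at most one.

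Finally, I would invoke the subadditivity of rank applied to the double sum, giving
\begin{equation}
\operatorname{rank}(\mathbf{A} \circ \mathbf{B}) \le \sum_{p=1}^{r} \sum_{q=1}^{s} 1 = rs = \operatorname{rank}(\mathbf{A}) \cdot \operatorname{rank}(\mathbf{B}),
\end{equation}
which is exactly the claimed inequality. This argument has no genuine obstacle, which is consistent with the lemma being a standard result; the only points needing care are the bookkeeping in the bilinear expansion and the entry-level verification of the rank-one identity. Once those are in place, the subadditivity estimate closes the proof in a single line, and the same reasoning will later let us bound the rank of the off-diagonal blocks of $\mathbf{M}$ in the unified formulation of Equation~(\ref{Unified-f}).
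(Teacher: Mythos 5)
Your proof is correct and follows essentially the same route as the paper's: both decompose $\mathbf{A}$ and $\mathbf{B}$ into sums of $\operatorname{rank}(\mathbf{A})$ and $\operatorname{rank}(\mathbf{B})$ rank-one outer products (the paper invokes the SVD, you invoke the definition of rank, but the decomposition is the same), use the identity $(\mathbf{u}\mathbf{v}^\top)\circ(\mathbf{x}\mathbf{y}^\top)=(\mathbf{u}\circ\mathbf{x})(\mathbf{v}\circ\mathbf{y})^\top$ to rewrite the Hadamard product as a sum of $rs$ rank-one matrices, and conclude by subadditivity of rank. Your explicit entrywise verification of the rank-one identity is a small bonus in rigor over the paper's presentation, but the argument is otherwise identical.
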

\begin{proof}
Let $r_{\mathbf{A}} = \operatorname{rank}(\mathbf{A})$ and $r_{\mathbf{B}} = \operatorname{rank}(\mathbf{B})$. Using the Singular Value Decomposition (SVD), we can write:
\[
\mathbf{A} = \sum_{i=1}^{r_{\mathbf{A}}} \mathbf{p}_i \mathbf{q}_i^{\top}, \qquad \mathbf{B} = \sum_{j=1}^{r_{\mathbf{B}}} \mathbf{s}_j \mathbf{t}_j^{\top},
\]
where $\mathbf{p}_i, \mathbf{q}_i$ (for $i = 1, \ldots, r_{\mathbf{A}}$) and $\mathbf{s}_j, \mathbf{t}_j$ (for $j = 1, \ldots, r_{\mathbf{B}}$) are appropriately sized vectors. The Hadamard product can be expressed as:
\[
\mathbf{A} \circ \mathbf{B} = \sum_{i=1}^{r_{\mathbf{A}}} \sum_{j=1}^{r_{\mathbf{B}}} (\mathbf{p}_i \circ \mathbf{s}_j)(\mathbf{q}_i \circ \mathbf{t}_j)^{\top},
\]
where $\mathbf{p}_i \circ \mathbf{s}_j$ denotes the element-wise product of vectors. This shows that $\mathbf{A} \circ \mathbf{B}$ is a sum of $r_{\mathbf{A}} \cdot r_{\mathbf{B}}$ rank-one matrices. Therefore:
\begin{align}
\operatorname{rank}(\mathbf{A} \circ \mathbf{B}) \le r_{\mathbf{A}} \cdot r_{\mathbf{B}} = \operatorname{rank}(\mathbf{A}) \cdot \operatorname{rank}(\mathbf{B}).
\end{align}
\end{proof}

\subsection{Rank Analysis of Attention and Mamba}

We follow Mamba-2~\cite{mamba2} to formulate the SSM as a semi-separable matrix using block matrix decomposition, which divides $\mathbf{M}$ into diagonal blocks for within-chunk computations and off-diagonal blocks for cross-time dependencies. To assess representational capacity, we partition $\mathbf{M}$ into a $L/C \times L/C$ grid of $C \times C$ sub-matrices and analyze their rank for a single head. We denote the rank of diagonal and off-diagonal sub-matrices as $R^{\text{diag}}$, $R^{\text{off}}$, respectively, where $\mathbf{L^{\text{off}}_{Attn}}$, $\mathbf{L^{\text{off}}_M}  \in \mathbb{R}^{C \times C}$ represent the off-diagonal portions of causal masks. For diagonal sub-matrices, the lower triangular structure ensures full rank for all three mechanisms, leading to guaranteed $R^{\text{diag}} = C$. For off-diagonal sub-matrices, \textit{Softmax Self-Attention} achieves full rank due to the nonlinear effect of softmax, which transforms $\mathbf{L^{\text{off}}_{Attn}}$ from a fixed to a learnable mask $R^{\text{off}}_{\text{Self-Attn}} = C$. According to Lemma~\ref{lem:hadamard_rank}, the rank of \textit{Linear Attention} and \textit{Mamba} subject to:
\begin{equation}
\begin{cases}
R^{\text{off}}_{\text{LinAttn}}(\mathbf{M}) \le \operatorname{rank}(\mathbf{L^{\text{off}}_{Attn}}) \cdot \operatorname{rank}(\psi(\mathbf{Q})\psi(\mathbf{K})^\top), \\
R^{\text{off}}_{\text{Mamba}}(\mathbf{M}) \le \operatorname{rank}(\mathbf{L^{\text{off}}_M}) \cdot \operatorname{rank}(\mathbf{C}^\top\mathbf{B}).
\end{cases}
\label{eq:rank_ineq}
\end{equation}

Analyzing each component: the causal mask $\mathbf{L^{\text{off}}_{Attn}}$ consists of all ones and is fixed, while $\mathbf{L^{\text{off}}_M}$ (Equation~\eqref{eq:L-causal_m}) is learnable, introducing nonlinearity within each submatrix.
In addition, given in Lemma \ref{lem:matrix_product_rank} (Appendix), in Linear Attention, the rank of $\psi(\mathbf{Q}) \psi(\mathbf{K})^\top$ is constrained by $D_{QK}$, whereas in Mamba, the rank of $\mathbf{C}^\top \mathbf{B}$ is bounded by $N$, where $N$ denotes the state dimension. Therefore, we have:
\begin{equation}
\begin{cases}
\text{rank}(\mathbf{L^{\text{off}}_{Attn}}) = 1, \quad \text{rank}(\psi(\mathbf{Q})\psi(\mathbf{K})^\top) \le D_{QK}, \\
\text{rank}(\mathbf{L^{\text{off}}_M}) \ge 1, \quad \text{rank}(\mathbf{C}^\top\mathbf{B}) \le N
\end{cases}
\label{eq:rank_comp}
\end{equation}

Combining these results from~\eqref{eq:rank_ineq} and~\eqref{eq:rank_comp}, we obtain:
\begin{equation}
R^{\text{off}}_{\text{LinAttn}} \le D_{QK}, \quad 
R^{\text{off}}_{\text{Mamba}} \le \operatorname{rank}(\mathbf{L^{\text{off}}_M}) \cdot N
\label{eq:rank_final}
\end{equation}
Through rank analysis, all three methods share $R^{\text{diag}} = C$, but differ in off-diagonal rank upper bounds:
\begin{equation}
R^{\text{off}}: \quad \underbrace{C}_{\text{Self-Attn}} \,>\, \underbrace{\text{rank}(\mathbf{L^{\text{off}}_M}) \cdot N}_{\text{Mamba upper bound}} \,>\, \underbrace{D_{QK}}_{\text{Lin-Attn upper bound}}
\end{equation}
For typical base models with $C=256$, $N=D_{QK}=64$, it establishes a \textbf{\textit{performance hierarchy of Softmax-self-attention $>$ Mamba $>$ Linear-attention}}. This configuration is the de facto standard across modern vision architectures~\cite{Dino,dinov2,vim,vmamba,linearattn,attention,deit}, ensuring broad applicability of our analysis. Critically, Mamba's $O(LNJ)$ complexity allows $N$ to scale efficiently, whereas Linear Attention's $O(LD_{QK}^2)$ quadratic cost restricts $D_{QK}$ growth. This scalability difference ensures that in typical deployments, $\text{rank}(\mathbf{L^{\text{off}}_M}) \cdot N > D_{QK}$, maintaining the rank hierarchy across practical model configurations. The detailed proof is provided in the Appendix Mathematical Foundations for Rank Analysis.

\section{Evaluation Metrics}
\label{sec:Method}

\subsection{Binary-AUC Metric for Feature Maps} 

Following our rank-based analysis, we extend this perspective to feature maps in vision tasks. The similarity matrix between the \texttt{[CLS]} token and image tokens can be viewed as a linear transformation, where the effective rank reflects the model’s ability to distinguish image regions. A higher rank suggests richer feature representation, while a lower rank indicates redundancy or information loss.
To further validate this, we analyze the structure and spectral properties of the feature map.

\begin{figure}[h!]
    \centering
    \includegraphics[width=1.0\linewidth]{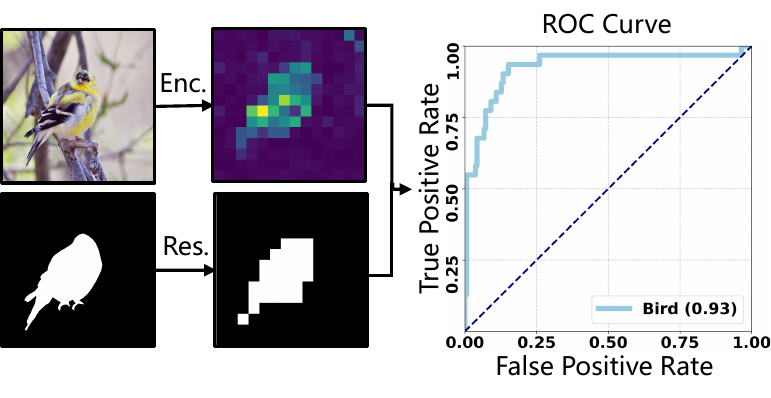}
   
    \caption{\textbf{Visualization metric}: The process involves encoding (Enc.) an image, resizing (Res.) the segmentation mask to align with the feature map, and calculating the AUC scores.}
   
    \label{fig:VisualizationMetricforFeatureMap}
\end{figure}

While feature map visualization is crucial for understanding model representations, existing studies~\cite{dinov2,Dino,Visiontransformersneedregisters,mamba-r} rely solely on visual inspection. We introduce Binary-AUC Metric, an AUC-based quantitative evaluation using segmentation labels (Figure~\ref{fig:VisualizationMetricforFeatureMap}). We compute AUC between feature map activations and ground truth segmentation, merging multiple labels into a foreground/background mask. For patch size $N_P$ and image size $H \times W$, the feature map size is $(H/N_P, W/N_P)$. After resizing the segmentation mask, we define binary masks $\text{Mask}_{\text{feature}}$ and $\text{Mask}_{\text{label}}$, generating activated masks for each threshold $t \in [0, 1]$:
\begin{equation}
\text{Mask}_{\text{feature}}^t(i, j) = 
\left\{
\begin{array}{ll}
1, & \text{if } \text{feature}(i, j) \geq t \\
0, & \text{otherwise}
\end{array}
\right.
\end{equation}
At each threshold $t$, we calculate the rates:
\begin{equation}
R(t, S) = \frac{|\text{Mask}_{\text{feature}}^t \cap S|}{|S|}
\end{equation}
where $\text{TPR}(t) = R(t, \text{Mask}_{\text{label}})$ measures the true positive rate and $\text{FPR}(t) = R(t, \overline{\text{Mask}_{\text{label}}})$ measures the false positive rate.
By varying $t$, we obtain the ROC curve. The area under the curve (AUC) is computed as:
\begin{equation}
\text{AUC} = \sum_{i} (\text{FPR}_{i+1} - \text{FPR}_{i}) \cdot \frac{\text{TPR}_{i+1} + \text{TPR}_{i}}{2}
\end{equation}
The AUC quantifies alignment with ground truth, where AUC = 1 indicates perfect alignment and AUC = 0.5 represents random guessing. To normalize:
\begin{equation}
\text{AUC}_{\text{normalized}} = \max(\text{AUC}, 1 - \text{AUC})
\end{equation}

\begin{figure*}[t]
\centering
\includegraphics[width=0.95\textwidth]{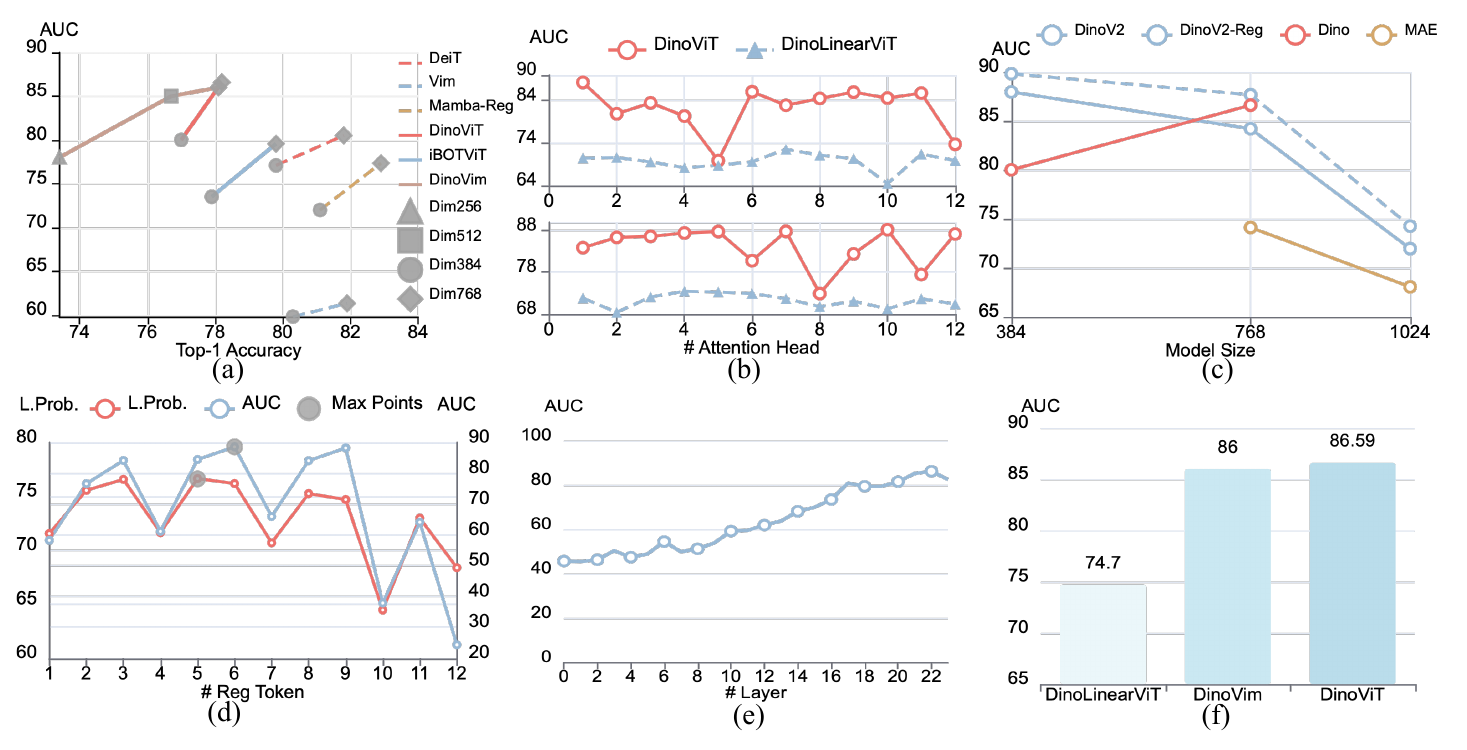}
\caption{\textbf{AUC analysis across different settings} (values in \%): (a) Supervised vs. self-supervised learning and model sizes. (b) Attention head contribution in self/linear attention. (c) Register mechanisms in DINOv2. (d) Register token position effects. (e) Feature quality evolution across layers. (f) AUC comparison: self/linear attention, and Mamba (Base model).}
\label{fig:AUC_all}
\end{figure*}

\subsection{Feature Maps from Self-Supervised Learning}

We choose to pretrain all our models using DINO’s self-supervised learning paradigm. Based on prior experience with ViT feature visualization, models pretrained with DINO often produce the best saliency maps for interpretation. Formally, an input image $\mathbf{I}$ generates augmented crops $\mathbb{I}_{global}$ and $\mathbb{I}_{local}$ for student network $\mathbf{S}$ and teacher network $\mathbf{T}$. The teacher's parameters follow EMA updates from the student. After centering the teacher's output and applying temperature-scaled softmax, we obtain $P_s$ and $P_t$ (gradient-blocked), with cross-entropy loss $-P_t \log P_s$.

\textit{Visual Mamba Models:}
Vim~\cite{vim} and Mamba-Reg~\cite{mamba-r} share bidirectional scanning but differ in register token count. A 2D image $\mathbf{I} \in \mathbb{R}^{H \times W \times N_c}$ is partitioned into flattened patches $\boldsymbol{\mathcal{P}} \in \mathbb{R}^{L \times (N_p^2 \cdot N_c)}$ (where $H$, $W$ are image dimensions, $N_c$ channels, $N_p$ patch size) and linearly projected to dimension $D$. Mamba-Reg introduces $n_r$ register tokens $\boldsymbol{\mathcal{R}}_i \in \mathbb{R}^{1 \times D}$, $i \in \{1, 2, \dots, n_r\}$ (when $n_r = 1$, it reduces to Vim). With positional embeddings $\mathbf{E}_{\text{pos}} \in \mathbb{R}^{(L+n_r) \times D}$, the formulation is:
\begin{equation}
\begin{aligned}
\mathbf{T}_0 &= \left[\boldsymbol{p}^1 \mathbf{W}; \cdots; \boldsymbol{p}^{i-1} \mathbf{W}; \boldsymbol{\mathcal{R}}_i; \right. \\
&\left. \boldsymbol{p}^{i+1} \mathbf{W}; \cdots; \boldsymbol{p}^L \mathbf{W}\right] + \mathbf{E}_{\text{pos}}
\end{aligned}
\end{equation}
Where \( \boldsymbol{p}^j \) represents the \( j \)-th patch of \( \boldsymbol{\mathcal{P}} \), and \( \mathbf{W} \in \mathbb{R}^{(N_p^2 \cdot N_c) \times D} \) is the projection matrix.

Additionally, while vanilla Visual Mamba uses fixed positional encoding for fixed-size images (e.g., 224$\times$224), we adopt DINO's adaptive positional encoding to handle varying image sizes (e.g., 224$\times$224, 96$\times$96) via bicubic interpolation, preserving spatial information across scales.

\section{Experiments}
\label{sec:Experiments}

\subsection{Implementation Details}
Similar to DINO \cite{Dino}, we conduct self-supervised pretraining of the Mamba models on the ImageNet \cite{imagenet} dataset, which includes 1.28 million training images along with 50k validation images, covering 1K categories. Mamba-S/16 model \cite{vim} are trained using the AdamW \cite{adamw} optimizer with a total batch size of 1024 across 16 GPUs, with training employing mixed-precision (BF16) and 10 workers per GPU for data loading. During the first 10 epochs, the learning rate is linearly ramped up to its initial value (lr = 0.0005 * batchsize/256) based on the linear scaling rule, after which it decays according to a cosine schedule \cite{cosineschedule}. Similarly, the weight decay follows a cosine schedule \cite{cosineschedule}, gradually increasing from 0.04 to 0.4. The teacher model's temperature starts at 0.04 and is warmed up to 0.07 over 30 epochs, while the student model’s temperature is fixed at 0.1. We employ bicubic interpolation to adjust the position embeddings, and apply data augmentations of BYOL \cite{BYOL}.

In the Mamba model configuration, we use the Mambav2 \cite{mamba2} model. We train four different model sizes: DinoVim-Tiny, DinoVim-Small, DinoVim-Base, DinoMamba-Reg-Base. Unlike the original Mamba-v1 \cite{mamba}, which has dimension parameters of 192, 384, and 768 for the tiny, small, and base models, Mamba-v2 \cite{mamba2} uses 256, 512, and 768, respectively. All models are trained with 24 layers using a bidirectional scanning approach.
In the Dino LinearViT model configuration, we replace \(\mathrm{softmax}(QK^T)\) with \(\psi(Q) \psi(K)^T\), where \(\psi(\cdot)\) represents the softmax function (i.e., \(\psi(Q) = \mathrm{softmax}(Q)\) and \(\psi(K) = \mathrm{softmax}(K)\)).

\begin{figure*}[t]
\centering
\includegraphics[width=0.9\textwidth]{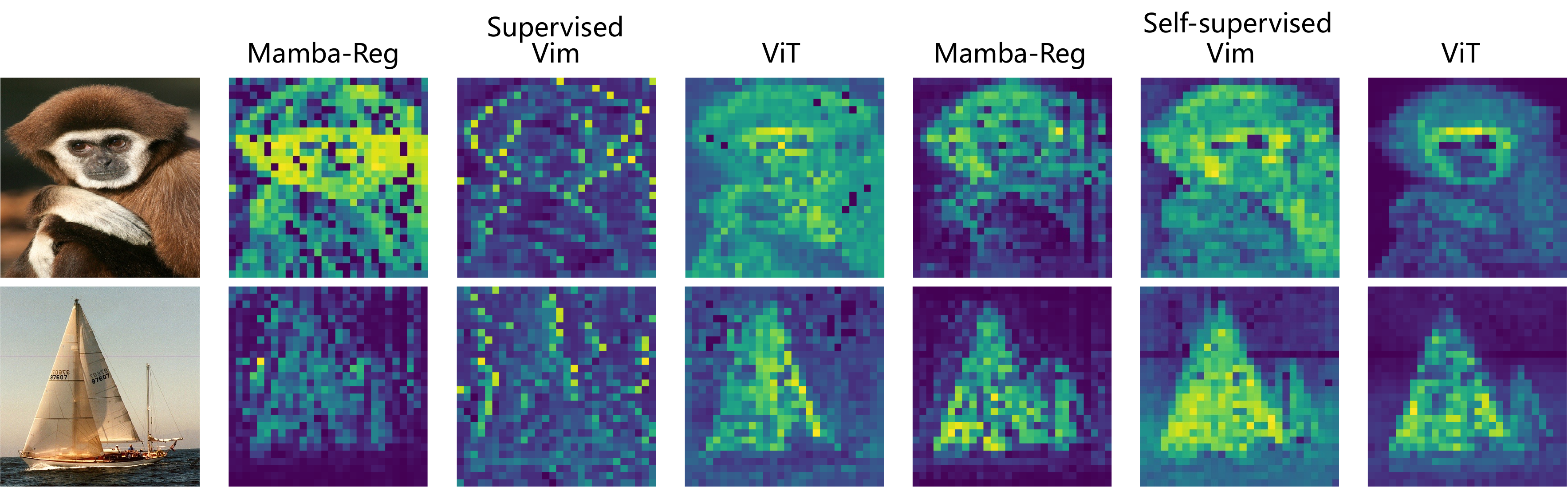}
\caption{\textbf{Comparison of feature map quality in supervised vs. self-supervised settings.} Feature maps from Mamba-Reg, Vim, and ViT models trained with supervised learning (left) and DINO self-supervised learning (right) on two example images. Self-supervised learning produces significantly clearer maps with better foreground-background distinction and reduced noise. ViT achieves the cleanest activations, while Mamba-based models show comparable quality in the self-supervised setting.}
\label{fig:Vis-super-unsuper}
\end{figure*}

\subsection{Visualization-Based Analysis}

\paragraph{Comparison in Supervised \& Self-Supervised Learning} 
Figure~\ref{fig:AUC_all} (a) quantifies feature map quality across different backbones \cite{deit,vim,mamba-r,Dino} in supervised and self-supervised settings. Dashed and solid lines represent AUC scores for supervised and self-supervised learning, respectively. Self-supervised methods consistently achieve higher AUC scores, indicating clearer feature maps in the DINO framework (Figure \ref{fig:Vis-super-unsuper}). This highlights the advantage of analyzing feature maps in self-supervised learning.

\paragraph{Comparison Across Different Model Sizes}
Figure~\ref{fig:AUC_all} (a) shows the relationship between model size and feature map quality. Larger models achieve higher AUC scores, suggesting that increased capacity enhances feature representation. Additionally, Figure~\ref{fig:combined} (Bottom) presents a comparison of Mamba’s feature maps across different model sizes.

\paragraph{Evaluating Attention Head Contribution}
Figure~\ref{fig:AUC_all}(b) shows AUC scores of 12 attention heads in the last two layers. Some heads achieve high scores for effective feature extraction, while others exhibit "lazy behavior." This unequal distribution reveals certain heads focus on less relevant information.  Linear Attention yields lower AUC scores than Self-Attention, aligning with its lower matrix rank and reduced representational capacity. These results validate AUC as a meaningful metric for evaluating feature maps and analyzing information distribution across attention heads.

\paragraph{Analyzing in DINOv2 and the Impact of Register }
Figure \ref{fig:AUC_all} (c) analyzes results from \cite{Visiontransformersneedregisters}, which show that adding registers significantly improves feature map quality in DINOv2 \cite{dinov2} by reducing high-norm outliers and enhancing model performance. Interestingly, while AUC generally increases with model size, DINOv2 exhibits the opposite trend. This aligns with \cite{Visiontransformersneedregisters}, which found that larger models produce more outliers, degrading feature quality. Further analysis suggests this decline is linked to the masked patch strategy in MAE \cite{mae}.

\paragraph{Analysis of Register Token Position Impact} Figure \ref{fig:AUC_all} (d) examines the impact of register token positions using linear probing and AUC scores. With 12 evenly distributed tokens, central positions capture more information, resulting in clearer feature maps and higher AUC scores. This finding is consistent with existing work \cite{vim}, where the middle CLS token demonstrates the best performance. Additionally, the strong correlation between AUC scores and linear probing accuracy suggests that feature map quality serves as a proxy for model performance. Both metrics exhibit a wave-like pattern, reflecting periodic influences on information acquisition. Note: AUC values in this figure are before normalization.

\paragraph{Evolution of Feature Quality Across Network Layers}
Figure \ref{fig:AUC_all} (e) shows AUC scores across layers, revealing how feature quality evolves with depth. Early layers have AUC below 50\%, indicating a focus on background information. As layers deepen, AUC rises, reflecting a shift toward target-specific features and more meaningful representations. Note: AUC values are before normalization.

\subsection{Quantitative Results}
In this section, to ensure a fair comparison, we keep the parameter count identical to the original model and make only minimal changes to the training setup, following ViT-B as reference: increasing gradient clipping (0.3 → 3.0), enabling BF16, and reducing the minimum learning rate (2e-6 → 1e-6), while keeping all other settings unchanged.

\begin{table}[h!]
\centering
\tablestyle{5pt}{1.1}
\begin{tabular}{l|cccc}
\multirow{2}{*}{Architecture} & 
\multirow{2}{*}{Dimension} & 
\multicolumn{2}{c}{\#Param. (M)} & 
\multirow{2}{*}{Top-1 acc.} \\
& & Backbone & Head & \\
\shline
ViT-S & 384 & 21 & 1.5 & 77.0 \\
ViT-B & 768 & 85 & 1.5 & \bf 78.2 \\
LinearViT-B & 768 & 85 & 1.5 & 74.7 \\
DinoVim-S & 512 & 40 & 1.5 & 76.7 \\
DinoVim-B & 768 & 88 & 1.5 & 78.1 \\
\hline
\multicolumn{5}{l}{\it \textcolor{gray}{Models with heavy-weight probing heads:}} \\
DinoVim-T & 256 & 10 & 2.0 & 73.7 \\
DinoVim-S & 512 & 40 & 2.0 & 77.4 \\
DinoVim-B & 768 & 88 & 6.2 & 78.3 \\\rowcolor{orange!20}
DinoMa.-R.-B & 768 & 88 & 6.2 & \bf 78.5 \\
\end{tabular}
\caption{\textbf{Linear probing results on ImageNet-1k.} All models are pretrained with the DINO paradigm~\cite{Dino}. ``Ma.-R.-B'' denotes MambaReg-Base~\cite{mamba-r}. Best results for each setup are \textbf{bolded}.}
\label{tab:ImageClassification}
\end{table}

\begin{figure}[!h]
\centering
\includegraphics[width=0.9\columnwidth]{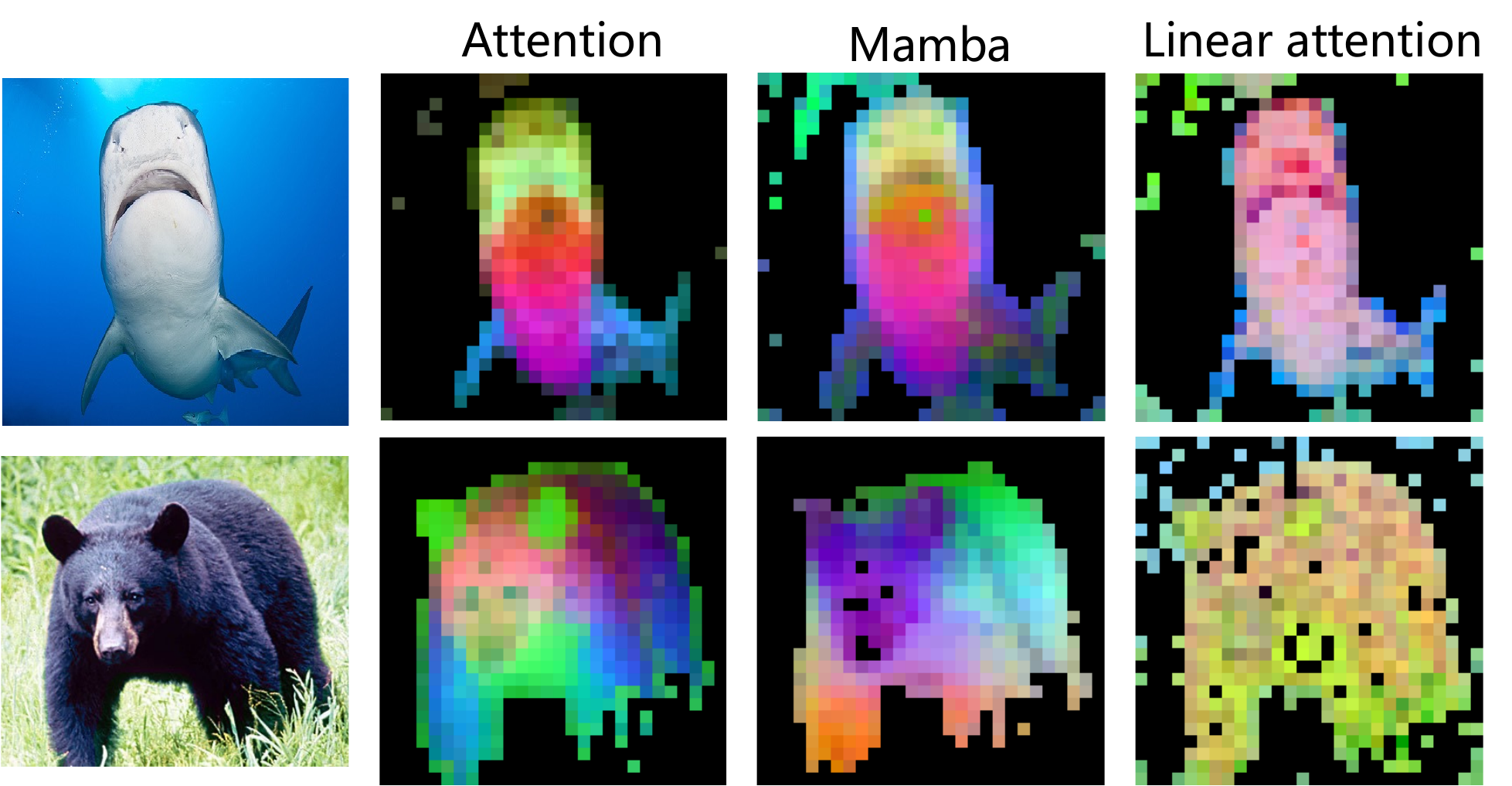}\\

\includegraphics[width=0.9\columnwidth]{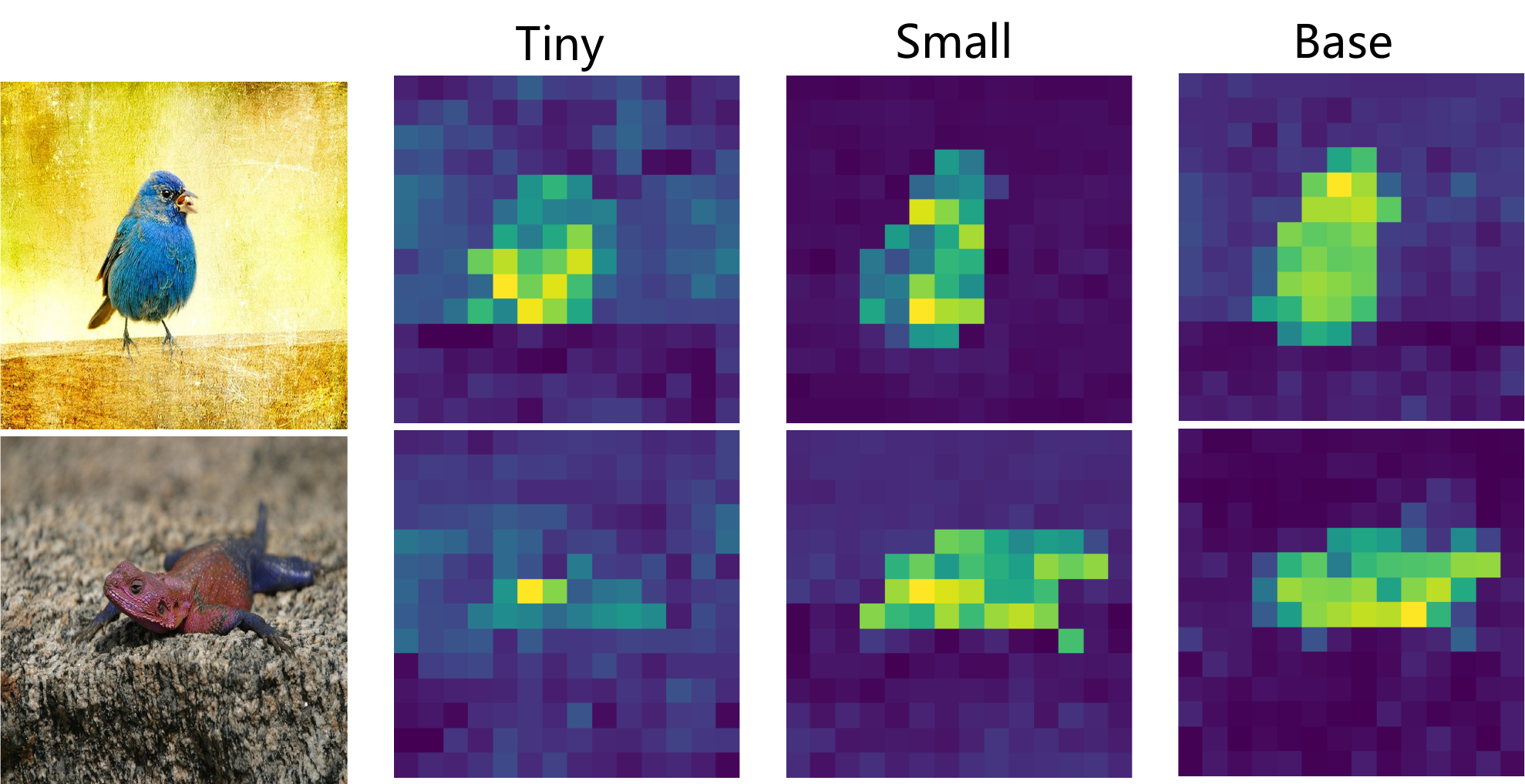}
\caption{\textbf{ PCA and model size comparison.} (Top) PCA visualization of self-attention, Mamba, and linear attention. (Bottom) Feature map comparison across different DinoVim model sizes.}
\label{fig:combined}
\end{figure}

\begin{table}[h!]
\centering
\tablestyle{4.5pt}{1.1}
\begin{tabular}{l|cccc}
Backbone & Mixer & Image size & mIoU (\%) & mAcc (\%) \\
\shline
LinearViT-B & linear attn. & 512$\times$512 & 29.2 & 38.7 \\
DinoVim-B & mamba-2 & 512$\times$512 & 38.0 & 47.1 \\\rowcolor{orange!20}
ViT-B & self-attn. & 512$\times$512 & \bf 43.2 & \bf 52.5 \\
\end{tabular}
\caption{\textbf{Semantic segmentation performance on ADE20K.} UperNet with different DINO-pretrained backbones. ViT-B outperforms DinoVim-B, which surpasses LinearViT-B, reflecting their respective representational capacities.}
\label{tab:SemanticSegmentation}
\end{table}

\begin{table}[h!]
\centering
\tablestyle{5.5pt}{1.2}\begin{tabular}{l|cccccc}
Backbone & AP$^{\text{b}}$ & AP$^{\text{b}}_{50}$ & AP$^{\text{b}}_{75}$ & AP$^{\text{b}}_{\text{s}}$ & AP$^{\text{b}}_{\text{m}}$ & AP$^{\text{b}}_{\text{l}}$ \\
\shline
LinearViT-B & 37.1 & 54.7 & 39.9 & 20.3 & 39.8 & 50.3 \\
DinoVim-B & 42.8 & 60.3 & 46.4 & 25.4 & 46.0 & 57.0 \\\rowcolor{orange!20}
ViT-B & 44.8 & 63.7 & 48.4 & 25.8 & 47.8 & 61.0 \\

Backbone & AP$^{\text{m}}$ & AP$^{\text{m}}_{50}$ & AP$^{\text{m}}_{75}$ & AP$^{\text{m}}_{\text{s}}$ & AP$^{\text{m}}_{\text{m}}$ & AP$^{\text{m}}_{\text{l}}$ \\
\hline
LinearViT-B & 32.6 & 52.1 & 34.8 & 15.0 & 34.5 & 48.9 \\
DinoVim-B & 37.4 & 58.0 & 40.3 & 18.2 & 40.1 & 55.0 \\\rowcolor{orange!20}
ViT-B & 39.1 & 61.0 & 41.9 & 18.9 & 41.0 & 58.1 \\
\end{tabular}
\caption{\textbf{Detection and segmentation results on COCO 2017}. Performance with Cascade Mask R-CNN using DINO-pretrained ViT-B, DinoVim-B, and LinearViT-B backbones.}
\label{tab:ObjectDetectionInstanceSegmentation}
\end{table}

\begin{table}[!h]
\centering
\tablestyle{4.5pt}{1.1}
\begin{tabular}{l|ccccccc}

Model & MF & T0.7 & TI & Sketch & A & R & Real \\
\shline
LinearViT-B & 62.5 & 71.0 & 76.9 & 21.6 &  9.6 & 32.7 & 81.3 \\
DinoVim-B        & 66.6 & 75.0 & 80.2 & 27.6 & 14.2 & 33.1 & 84.3 \\
\rowcolor{orange!20}
ViT-B        & 66.6 & 75.4 & 80.7 & 25.5 & 15.4 & 38.0 & 84.6 \\

\end{tabular}
\caption{\textbf{Robustness evaluation on ImageNet variants.} Top-1 accuracy comparison on ImageNet-V2 (MF: Matched-Frequency, T0.7: Threshold 0.7, TI: Top-Images) and ImageNet-(Sketch, A, R, Real).}
\label{tab:imagenet-comparison}
\end{table}

\paragraph{Image Classification}  
As shown in Table \ref{tab:ImageClassification}, for linear probing, we find that Self-Attention outperforms Mamba, which in turn outperforms Linear Attention. While Mamba achieves performance close to ViT, it requires more parameters. This trend is consistent with the AUC scores presented in Figure \ref{fig:AUC_all} (f), where Self-Attention achieves the highest AUC scores, indicating the strongest feature representation capability. Mamba  follows closely with a slightly lower AUC scores. In contrast, Linear Attention has the lowest AUC scores, reflecting its limited representational capacity due to its lower matrix rank. Figure \ref{fig:combined} (Top) shows that self-attention clearly separates features, while Mamba is more blurred, and linear attention lacks differentiation.

\paragraph{Semantic Segmentation} To evaluate features learned under the DINO paradigm, we initialize the semantic segmentation model with weights pretrained using DINO on the image classification task. As presented in Table \ref{tab:SemanticSegmentation}, Self-Attention achieves better segmentation performance than Linear Attention. Mamba further outperforms LinearViT-B, demonstrating its advantage in long-sequence modeling.

\paragraph{Object Detection and Instance Segmentation} As shown in Table \ref{tab:ObjectDetectionInstanceSegmentation}, Self-Attention outperforms Mamba in object detection and segmentation, reflecting its higher matrix rank and stronger feature representation. In contrast, Mamba surpasses LinearViT-B, with its state-space model enabling more efficient long-sequence processing and better global information utilization, making it competitive in long-range modeling tasks. LinearViT-B performs worse due to its lower matrix rank and weaker spatial representation.

\paragraph{Robustness Evaluation} 
We evaluate model robustness on ImageNet-V2~\citep{recht2019imagenet}, ImageNet-Sketch~\citep{wang2019learning}, ImageNet-A~\citep{hendrycks2021natural}, ImageNet-R~\citep{hendrycks2021many}, and ImageNet-Real~\citep{beyer2020we}. Table~\ref{tab:imagenet-comparison} shows that ViT achieves the highest accuracy under these dataset distribution shifts, followed by Mamba, while LinearViT exhibits a noticeable performance drop, consistent with our theoretical rank analysis. These results suggest that higher-rank architectures generalize better to unseen distributions, with Mamba maintaining competitive robustness despite its linear complexity.
\section{Conclusion}
\label{sec:Conclusion}

This paper investigates the representational capacity of Vision Mamba through rank analysis and feature map visualization, demonstrating its balance between efficiency and expressiveness. Experiments show that Mamba matches Transformers in classification and outperforms Linear Attention in high-resolution segmentation and detection, highlighting its superior long-sequence modeling. To assess feature quality, we propose the Binary-AUC Metric, which also serves as a stethoscope for model analysis, identifying underperforming components (e.g., attention heads, register tokens). Improving their effectiveness could further enhance the representational capacity of both Attention and Mamba.

{
    \small
    \bibliographystyle{ieeenat_fullname}
    \bibliography{main}
}

\clearpage
\setcounter{page}{1}
\maketitlesupplementary

\section*{Appendix}

\subsection*{A. More Feature Map Visualization}
Due to space limitations in the main text, this section provides supplementary visualizations of the feature maps for the Transformer \cite{attention}/Mamba \cite{mamba2} models presented in the main text, as shown in Figure \ref{fig:supp-vis-mid-max-mean}, \ref{fig:supp-vis-model-size}, \ref{fig:supp-Vis-super-unsuper}, \ref{fig:Supp-vIs-linear-mamba-attn}, \ref{fig:supp-PCA}.

\subsection*{B. Visualizing Long-Range Dependencies}
In this section, utilizing Mamba-Reg \cite{mamba-r}, we discovered that inserting register tokens at different positions allows Mamba to exhibit a multi-head-like characteristic similar to Transformers, as shown in Figure \ref{fig:supp-vis-mamba-reg}. Each register token functions akin to an attention head, with even the forward-placed register tokens (e.g. Image 2, Index 2) able to perceive distant target objects within the image. This demonstrates Mamba's ability to capture global information, transcending the limitations of local token interactions. 

\begin{figure*}[!h]
    \centering
    \includegraphics[width=0.9\linewidth]{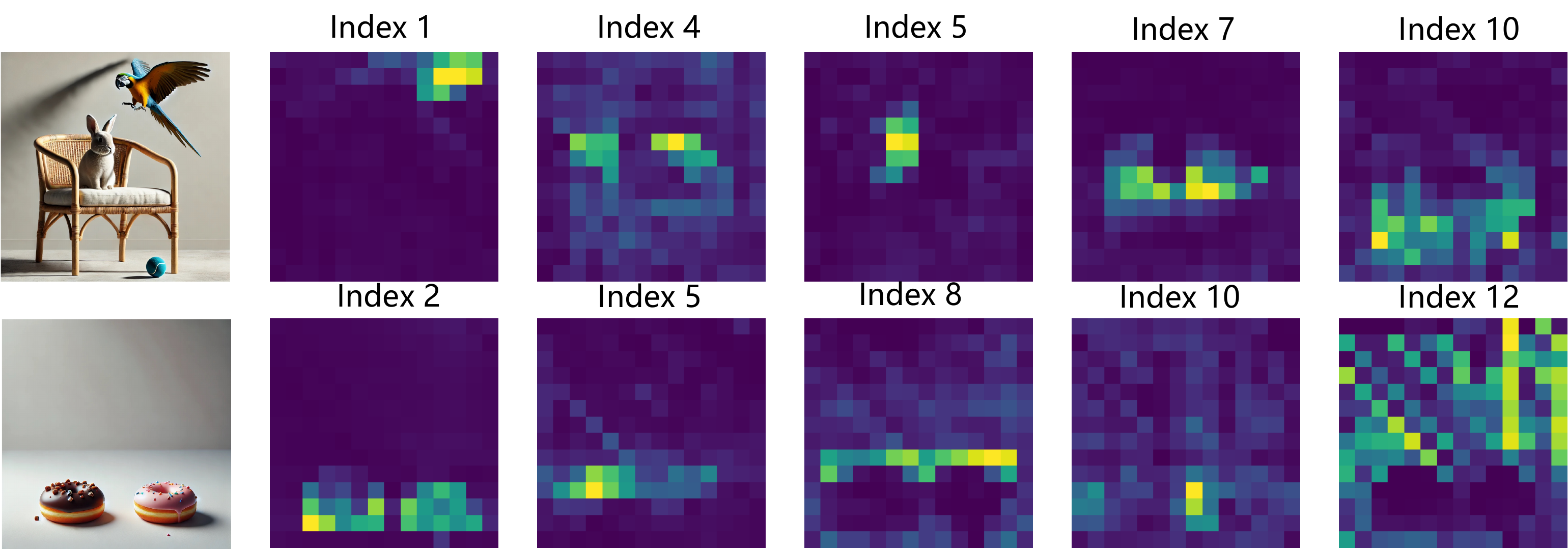}
    \captionsetup{type=figure, width=0.9\textwidth}
    \vspace{-0.2cm}
    \caption{Feature maps corresponding to different register tokens, which are evenly distributed among sequence tokens, reveal a role similar to multi-head attention. Forward-placed register tokens (e.g., Image 2, Index 2) capture global patterns, while later tokens (e.g., Index 10) focus on specific regions, demonstrating Mamba's balance of global and local information.}
    \label{fig:supp-vis-mamba-reg}
    \vspace{-0.3cm}
\end{figure*}

\begin{figure*}[!h]
    \centering
    \begin{minipage}{0.49\linewidth}
        \centering
        \includegraphics[width=\linewidth]{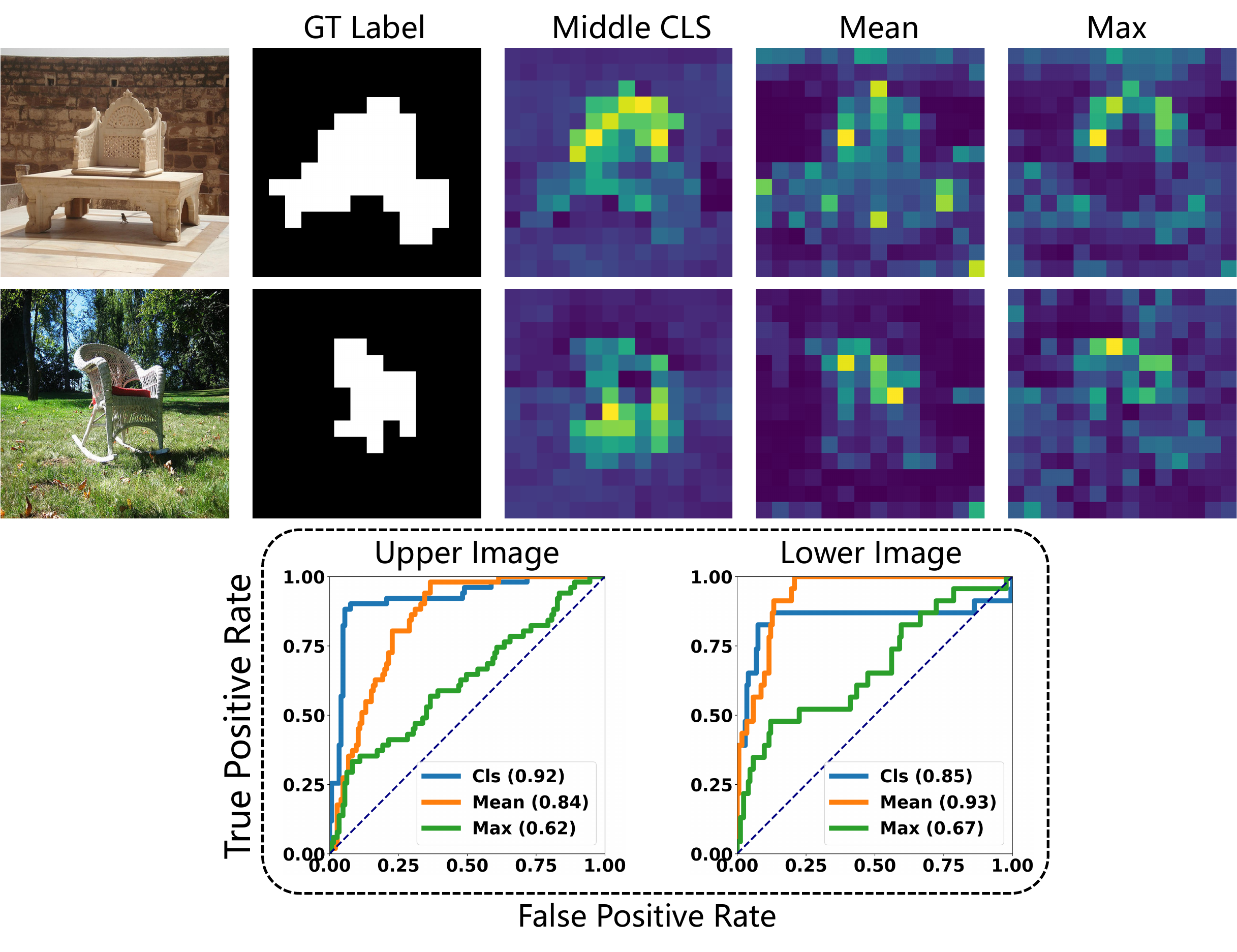}
       
        \caption{Feature map comparison in DinoVim: The Middle cls captures global information with clearer feature maps. Bottom: ROC curve comparisons.}
        \label{fig:supp-vis-mid-max-mean}

    \end{minipage}
    \hfill
    \begin{minipage}{0.49\linewidth}
        \centering
        \includegraphics[width=\linewidth]{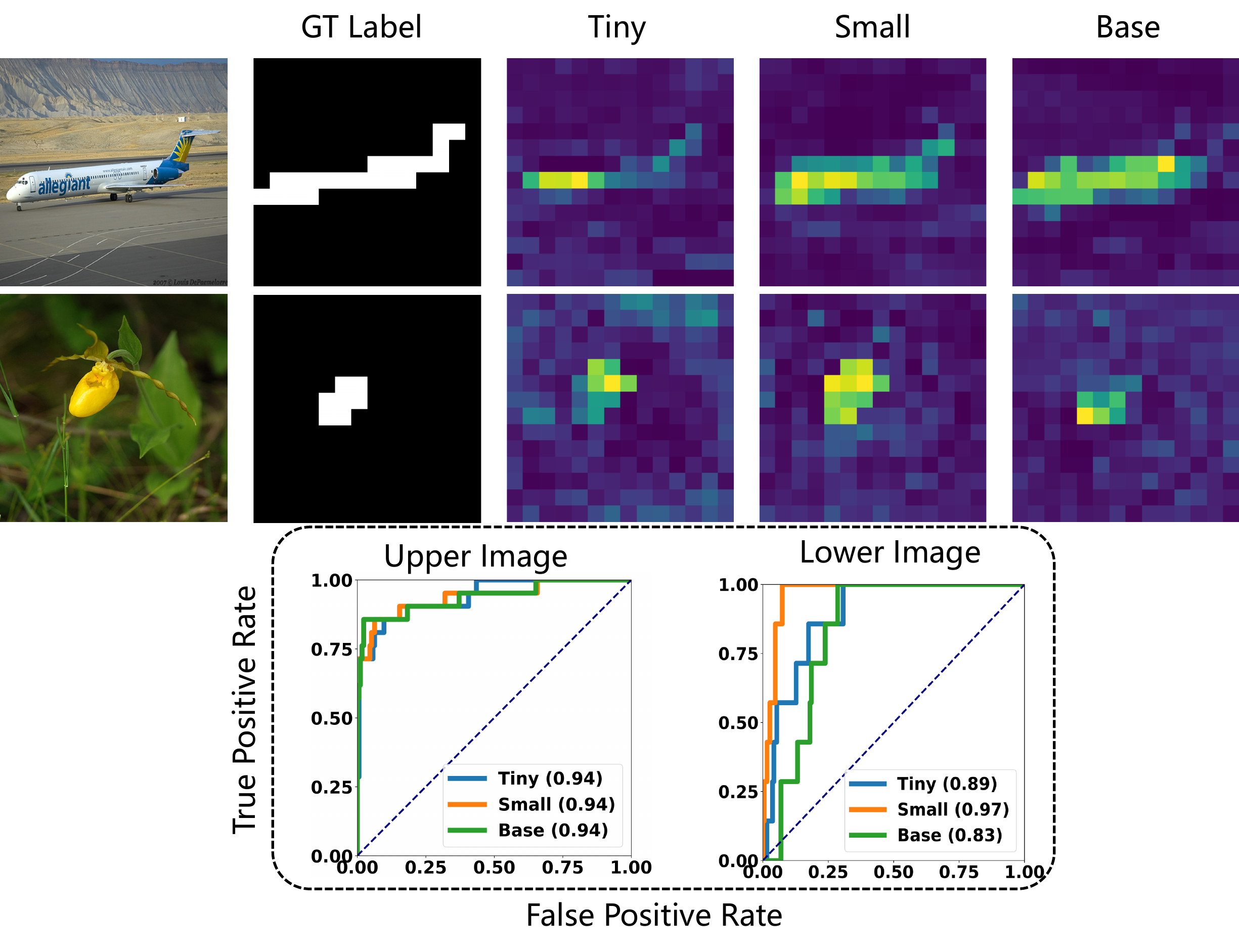}

        \caption{Feature map comparison in DinoVim across different model sizes, the small model displays the clearest feature maps. Bottom: ROC curve comparisons.}
        \label{fig:supp-vis-model-size}

    \end{minipage}
    
\end{figure*}

\begin{figure*}[!h]
    \centering
    \includegraphics[width=0.75\linewidth]{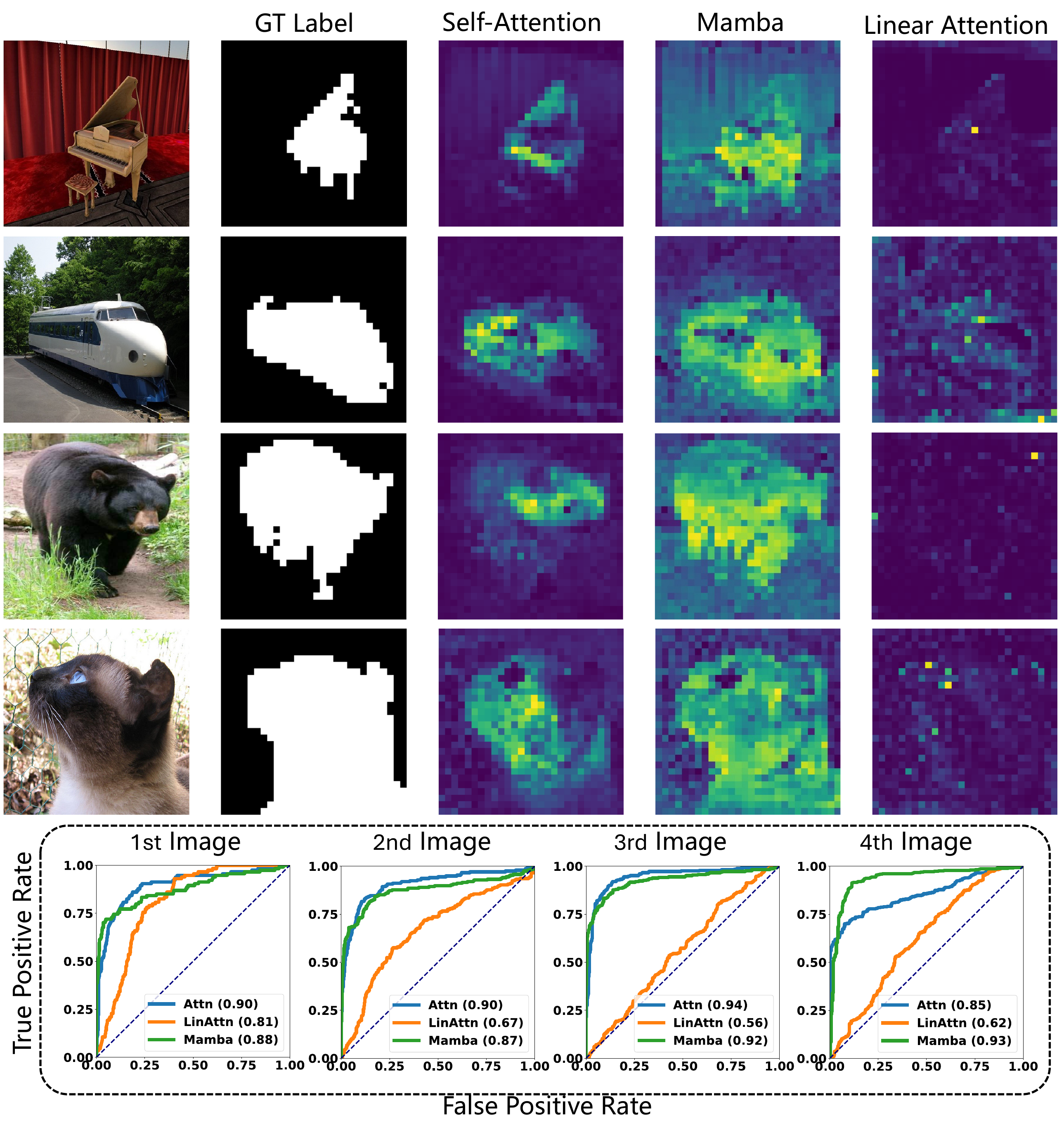}
    \parbox{0.75\linewidth}{
        \captionof{figure}{Feature maps comparison among Self-Attention, Mamba, and Linear Attention shows that Self-Attention provides the clearest target-background distinction, Mamba captures moderate clarity with global patterns, while Linear Attention exhibits blurred boundaries and weaker separation.}
        \label{fig:Supp-vIs-linear-mamba-attn}
    }
\end{figure*}

\begin{figure*}[!h]
    \centering
    \includegraphics[width=1.0\linewidth]{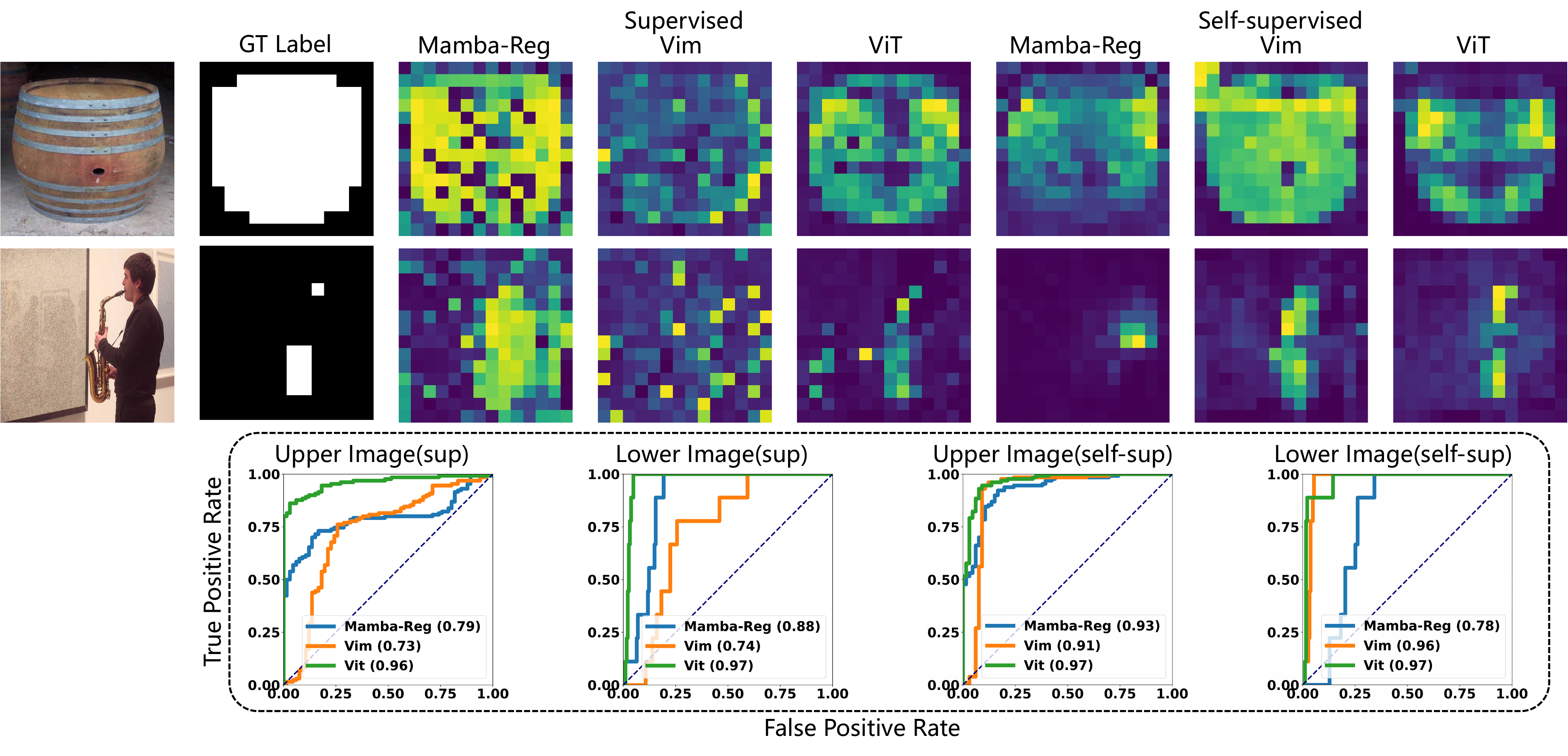}
    \caption{Feature map quality comparison: Supervised vs. Self-supervised. The figure illustrates that feature maps generated by self-supervised learning are clearer compared to those from supervised learning. Additionally, ViT produces less noisy feature maps compared to Mamba. The bottom section presents the ROC curve comparisons.}
    \label{fig:supp-Vis-super-unsuper}
\end{figure*}

\begin{figure*}[!h]
    \centering
    \includegraphics[width=1.0\linewidth]{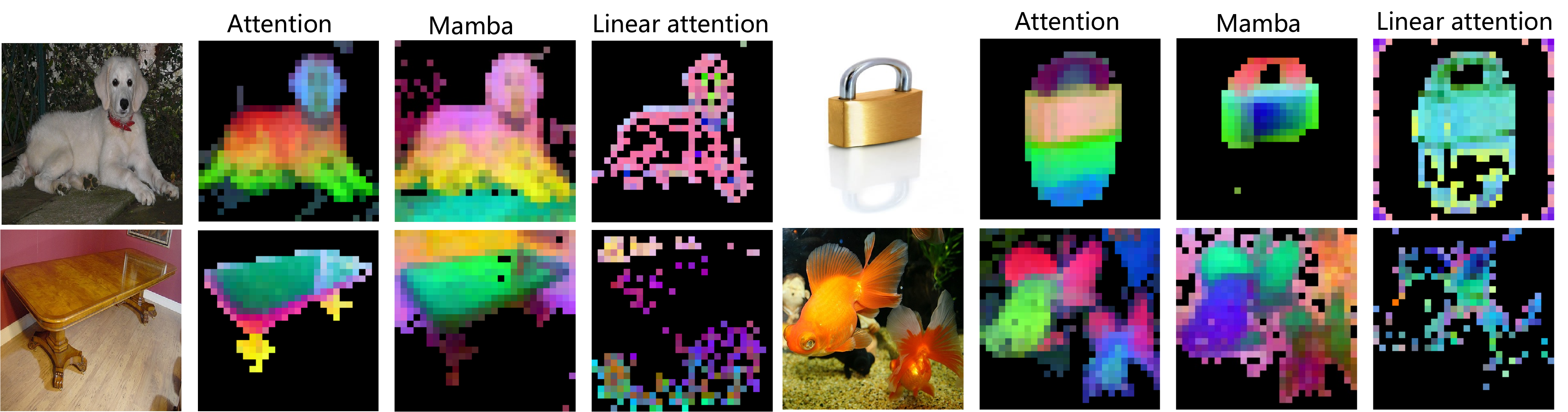}
    \caption{More visualization of the first PCA components of self-attention, Mamba, and linear attention, all generated under a consistent threshold.}
    \label{fig:supp-PCA}
\end{figure*}

\textbf{Evaluation dataset for feature maps}
To accurately evaluate the quality of the model's performance in visualization, we use ImageNet-S \cite{imagenet-s} as the evaluation dataset. This dataset contains 919 categories, with 1,183,322 images for training, 12,419 for validation, and 27,423 for testing. Each image is accompanied by high-quality semantic segmentation annotations, enabling a detailed analysis of the quality and interpretability of feature maps. In our setting, we use only the validation set for evaluation.

\subsection*{C. Downstream tasks setting} 
\label{sec:appendix-Downstreamtaskssetting}
To further investigate the performance of Mamba as a pre-trained model in the Dino setting, we conduct two downstream experiments: Semantic Segmentation, and Object Detection and Instance Segmentation.

\textbf{Semantic Segmentation Settings.} We conduct the semantic segmentation task on the ADE20K dataset \cite{ade20k}, which comprises 20K training images, 2K validation images, and 3K test images, encompassing 150 fine-grained semantic categories. For our experiments, we utilize the UperNet framework \cite{UperNet} as the baseline. The optimizer is configured as AdamW \cite{adamw} with a learning rate of 6e-5, momentum parameters of (0.9, 0.999), and a weight decay of 0.05. The learning rate scheduling includes two phases: initially, LinearLR is applied for the first 1500 steps with a starting factor of 1e-6. Following this, the main training phase utilizes a PolyLR scheduler up to 160,000 steps. 

\textbf{Object Detection and Instance Segmentation Settings} We perform object detection and instance segmentation experiments on the COCO 2017 dataset \cite{coco2017}, which includes 118K training, 5K validation, and 20K test images. Cascade Mask R-CNN \cite{cascadercnn} is adopted as the base framework. The optimization setup uses AdamW \cite{adamw} with a learning rate of 0.0001 and a weight decay of 0.1, where specific parameters, such as biases and positional embeddings, have decay multipliers set to zero to avoid excess regularization. The learning rate schedule comprises two phases: initially, LinearLR is applied for the first 500 steps to gradually increase the learning rate, followed by MultiStepLR, which decays the learning rate at epochs 8 and 11, ensuring stable convergence during training. The images are resized to \(1333 \times 800\) pixels for training, validation, and testing, ensuring consistency across these stages.

\subsection*{D. More visualization of PCA}
This figure \ref{fig:supp-PCA} presents the PCA visualization of different attention mechanisms, where color distribution reflects how each model captures features in the representation space. The PCA results of self-attention clearly distinguish different parts of the object, providing a well-defined separation of features. In contrast, Mamba appears slightly more blurred than self-attention, with lower differentiation across object regions. Linear attention, on the other hand, exhibits a more sparse PCA representation and struggles to effectively separate different parts using color. This suggests that each mechanism focuses on different aspects of feature extraction, and the PCA visualization further highlights their distinct information distributions.

\begin{figure*}[t]
    \centering
    \includegraphics[width=1.0\linewidth]{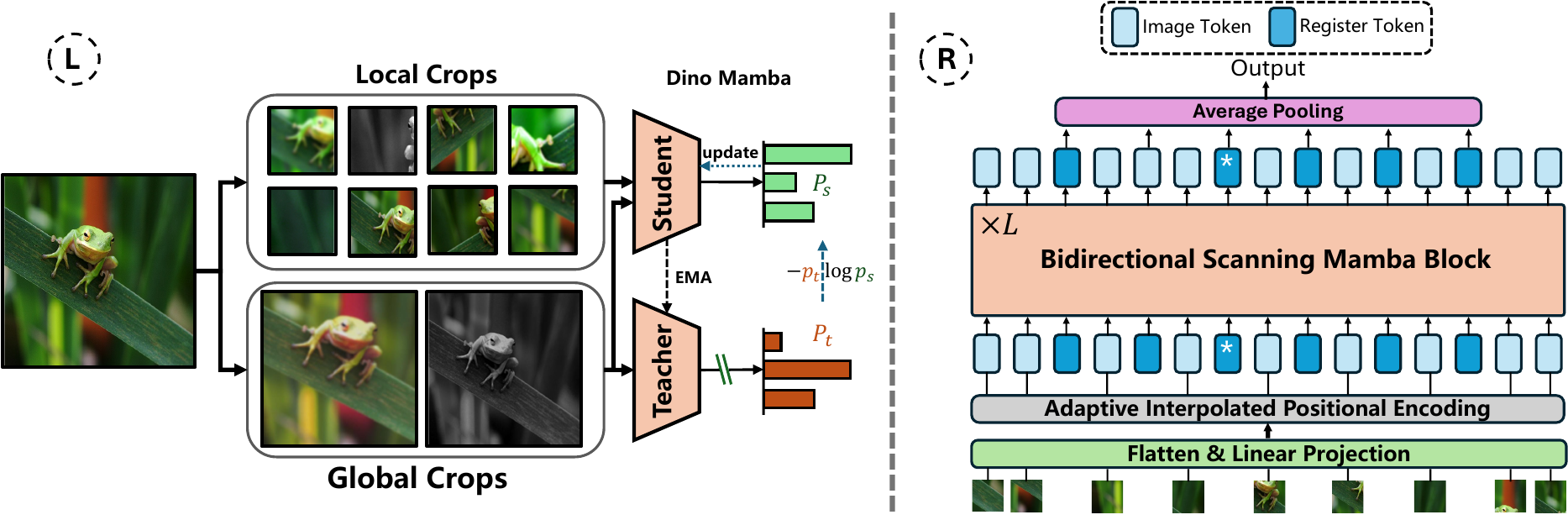} 
    \vspace{-0.2cm}
    \caption{Visual Mamba in the DINO Framework: The \textbf{left} side shows the standard DINO architecture. The \textbf{right} side illustrates the structure of the student and teacher models, inspired by Mamba-Reg \cite{mamba-r}, with register tokens inserted at various positions to observe learned representations. When only a single register token is used at the middle position (`*'), this setup corresponds to the Vim \cite{vim} model. Additionally, an adaptive iterpolated positional encoding module is included to accommodate inputs of varying sizes (e.g., 224$\times$224, 96$\times$96). }
    \label{fig:mambaDINOFramework}
    \vspace{-0.2cm}
\end{figure*}

\subsection*{E. Visual Mamba in the DINO Framework}
This section supplements the previous discussion with additional details and visualization. As shown in Figure \ref{fig:mambaDINOFramework}, the model employs bidirectional scanning with register tokens for feature aggregation. Vim uses a single register token, while Mamba-Reg introduces multiple tokens for hierarchical learning. The visualization highlights how positional encoding and token interactions enhance representation learning.

\subsection*{F. Hyper-parameter ablation}

In the ablation study, we systematically evaluate various design choices through controlled experiments on the small DinoVim model.

\textbf{Latent feature expansion for linear probing.} In linear probing protocols, a common way to improve predictive performance is to concatenate the output of multiple blocks as an expanded latent feature. In our experiments, by default we follow DINO~\cite{Dino}'s approach to combine the output of the last four model blocks, and we summarize the ablation results of this mechanism in the left section of Table~\ref{tab:parallel_tables}. As shown, the top-1 accuracy improves with more concatenated blocks, peaking at 77.2\% with 4 blocks.

\begin{table}[h!]
    \centering
    \small 
    \begin{tabular}{c@{\hspace{0.15cm}}c@{\hspace{0.15cm}}c} 
        \begin{tabular}{c|c}
            \toprule
            \textbf{\#blocks} & \textbf{acc} \\
            \midrule
            1 & 71.8 \\
            2 & 75.6 \\
            3 & 76.7 \\
            \textbf{4} & \textbf{77.2} \\
            \bottomrule
        \end{tabular}
        &
        
        \begin{tabular}{c|c}
            \toprule
            \textbf{wd} & \textbf{acc} \\
            \midrule
            0 & 77.2 \\
            1e-2 & 75.7 \\
            5e-3 & 76.4 \\
            \textbf{1e-3} & \textbf{77.3} \\
            \bottomrule
        \end{tabular}
        &

        \begin{tabular}{c|c}
            \toprule
            \textbf{lr} & \textbf{acc} \\
            \midrule
            2.5e-4 & 77.1 \\
            5e-4 & 77.2 \\
            1e-3 & 77.3 \\
            \textbf{2e-3} & \textbf{77.4} \\
            \bottomrule
        \end{tabular} \\
    \end{tabular}
    \caption{Comparison of top-1 accuracy for different hyperparameters: \textbf{Left}: Number of blocks. \textbf{Middle}: Weight decay (fixed lr = 1e-3). \textbf{Right}: Learning rate (fixed wd = 1e-3).}
    \label{tab:parallel_tables}
    
\end{table}

\noindent\textbf{Learning rate and weight decay.} As shown in the middle/right section of Table~\ref{tab:parallel_tables}, top-1 accuracy varies with different \(\texttt{lr}\) and \(\texttt{wd}\) combinations. Fixing \(\texttt{lr} = 1\text{e-3}\), increasing \(\texttt{wd}\) from \(0\) to \(1\text{e-3}\) raises accuracy from \(77.2\) to \(77.3\). Likewise, with \(\texttt{wd} = 1\text{e-3}\), increasing \(\texttt{lr}\) from \(2.5\text{e-4}\) to \(2\text{e-3}\) achieves the highest accuracy of \(77.4\). The optimal setting (\(\texttt{lr} = 2\text{e-3}, \texttt{wd} = 1\text{e-3}\)) yields the best \(77.4\%\).

\begin{wraptable}{r}{0.25\textwidth}
    \centering
    \small
    \renewcommand{\arraystretch}{0.95}
    \hspace{-5mm} 
    \begin{tabular}{c|c}
        \toprule
        \textbf{Cls strat.} & \textbf{Top-1 acc} \\
        \midrule
        mean & 75.9 \\
        max & 74.6 \\
        middle cls, mean & 76.5 \\
        middle cls, max & 76.3 \\
        \textbf{middle cls} & \textbf{77.2} \\
        \bottomrule
    \end{tabular}
    \caption{Top-1 accuracy of different classification strategies.}
    \vspace{-0.5cm}
    \label{tab:cls_strategy_accuracy}
\end{wraptable}

\textbf{Ablation of Classification Design}  
As shown in Table \ref{tab:cls_strategy_accuracy}, the \textit{middle cls} strategy achieves the highest top-1 accuracy at 77.2, outperforming the \textit{mean} (75.9) and \textit{max} (74.6) strategies. Combined strategies, such as \textit{middle cls, mean} (76.5) and \textit{middle cls, max} (76.3), offer slight improvements over \textit{mean} or \textit{max} alone but fall short of the \textit{middle cls}'s standalone performance. This indicates that the \textit{middle cls} alone possesses sufficient representational capacity to effectively summarize the feature map.

\subsection*{G. Mathematical Foundations for Rank Analysis}

This appendix provides the formal mathematical foundations that underpin the rank-based analysis presented in Section \ref{sec:TheoreticalAnalysis} of the main paper. We present three well-established lemmas from matrix theory regarding block lower triangular matrices, Hadamard products, and matrix products. We provide their proofs for completeness and demonstrate how they directly support our theoretical analysis.

The following lemma establishes a fundamental lower bound on the rank of block lower triangular matrices, which is essential for our analysis of the matrix $\mathbf{M}$ in Equation \ref{Unified-f}. This is a well-known result in matrix theory.
\textbf{Note:} The following lemmas are written using independent notations.

\subsection*{Hadamard Product Rank Bounds}

The following lemma establishes rank bounds for the Hadamard (element-wise) product of matrices, which is crucial for analyzing the structure of $\mathbf{M}$ in our unified formulation. This bound is a standard result in matrix analysis.

\paragraph{Rank of Matrix Products}

The following lemma establishes a fundamental upper bound on the rank of matrix products, which is essential for analyzing the representational capacity of composed transformations in our models. This is one of the most important results in linear algebra for understanding how information flows through sequential operations. \textbf{Note}: \textit{The following lemma uses independent abstract notations for clarity}

\begin{lemma}[Rank Bound for Matrix Products]
\label{lem:matrix_product_rank}
For any two matrices $\mathbf{A} \in \mathbb{R}^{m \times n}$ and $\mathbf{B} \in \mathbb{R}^{n \times p}$, the rank of their product satisfies:
\[
\operatorname{rank}(\mathbf{AB}) \le \min\{\operatorname{rank}(\mathbf{A}), \operatorname{rank}(\mathbf{B})\}.
\]
\end{lemma}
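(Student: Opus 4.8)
The plan is to prove both bounds separately---namely $\operatorname{rank}(\mathbf{AB}) \le \operatorname{rank}(\mathbf{A})$ and $\operatorname{rank}(\mathbf{AB}) \le \operatorname{rank}(\mathbf{B})$---and then combine them. The cleanest route is to reason about column and row spaces rather than manipulate determinants or invoke rank-nullity, since the containment of subspaces makes each inequality almost immediate.

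First I would establish $\operatorname{rank}(\mathbf{AB}) \le \operatorname{rank}(\mathbf{A})$ via column spaces. Writing $\mathbf{B}$ in terms of its columns $\mathbf{b}_1, \ldots, \mathbf{b}_p$, the $k$-th column of $\mathbf{AB}$ is exactly $\mathbf{A}\mathbf{b}_k$, which is a linear combination of the columns of $\mathbf{A}$ with coefficients given by the entries of $\mathbf{b}_k$. Hence every column of $\mathbf{AB}$ lies in the column space of $\mathbf{A}$, so $\operatorname{col}(\mathbf{AB}) \subseteq \operatorname{col}(\mathbf{A})$. Taking dimensions and recalling that the rank equals the dimension of the column space yields $\operatorname{rank}(\mathbf{AB}) = \dim\bigl(\operatorname{col}(\mathbf{AB})\bigr) \le \dim\bigl(\operatorname{col}(\mathbf{A})\bigr) = \operatorname{rank}(\mathbf{A})$.

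Next I would establish the symmetric bound $\operatorname{rank}(\mathbf{AB}) \le \operatorname{rank}(\mathbf{B})$ via row spaces, by the dual argument: each row of $\mathbf{AB}$ is a linear combination of the rows of $\mathbf{B}$, so $\operatorname{row}(\mathbf{AB}) \subseteq \operatorname{row}(\mathbf{B})$ and thus $\operatorname{rank}(\mathbf{AB}) \le \operatorname{rank}(\mathbf{B})$. Alternatively, this follows immediately from the first bound applied to transposes, using $\operatorname{rank}(\mathbf{AB}) = \operatorname{rank}\bigl((\mathbf{AB})^{\top}\bigr) = \operatorname{rank}(\mathbf{B}^{\top}\mathbf{A}^{\top}) \le \operatorname{rank}(\mathbf{B}^{\top}) = \operatorname{rank}(\mathbf{B})$. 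Combining the two inequalities gives $\operatorname{rank}(\mathbf{AB}) \le \min\{\operatorname{rank}(\mathbf{A}), \operatorname{rank}(\mathbf{B})\}$, as claimed.

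This result is elementary enough that there is no genuine obstacle; the only point requiring minor care is the fact that column rank equals row rank (so that "rank" is unambiguous when passing to the transpose), which I would either cite as standard or sidestep entirely by proving the two inequalities independently through the column-space and row-space containments above. For the application in Section~\ref{sec:TheoreticalAnalysis}, the relevant consequence is that $\operatorname{rank}(\psi(\mathbf{Q})\psi(\mathbf{K})^{\top}) \le D_{QK}$ and $\operatorname{rank}(\mathbf{C}^{\top}\mathbf{B}) \le N$, which follow by setting the inner dimension to $D_{QK}$ and $N$ respectively, so I would close by noting this specialization.
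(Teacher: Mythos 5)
Your proof is correct and takes essentially the same route as the paper's: both derive the two inequalities from column-space/row-space reasoning and use the transpose identity $\operatorname{rank}(\mathbf{AB}) = \operatorname{rank}(\mathbf{B}^{\top}\mathbf{A}^{\top})$ to get the second bound from the first. The only cosmetic difference is that you pair the column-space containment with $\operatorname{rank}(\mathbf{A})$ and the row-space argument with $\operatorname{rank}(\mathbf{B})$, whereas the paper orders the two parts the other way around; the substance is identical.
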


\begin{proof}
We prove both inequalities separately.

\textit{Part 1}: $\operatorname{rank}(\mathbf{AB}) \le \operatorname{rank}(\mathbf{B})$

Let $r_{\mathbf{B}} = \operatorname{rank}(\mathbf{B})$. Then the column space of $\mathbf{B}$ has dimension $r_{\mathbf{B}}$, which means every column of $\mathbf{B}$ can be expressed as a linear combination of $r_{\mathbf{B}}$ basis vectors.

The columns of $\mathbf{AB}$ are linear combinations of the columns of $\mathbf{A}$, where the coefficients come from the columns of $\mathbf{B}$. More precisely, if $\mathbf{b}_j$ denotes the $j$-th column of $\mathbf{B}$ and $\mathbf{a}_i$ denotes the $i$-th column of $\mathbf{A}$, then the $j$-th column of $\mathbf{AB}$ is:
\[
(\mathbf{AB})_j = \sum_{i=1}^{n} b_{ij} \mathbf{a}_i.
\]

Since each column of $\mathbf{B}$ lies in a space of dimension $r_{\mathbf{B}}$, the columns of $\mathbf{AB}$ must lie in a space of dimension at most $r_{\mathbf{B}}$. Therefore:
\[
\operatorname{rank}(\mathbf{AB}) \le \operatorname{rank}(\mathbf{B}).
\]

\textit{Part 2: $\operatorname{rank}(\mathbf{AB}) \le \operatorname{rank}(\mathbf{A})$}

Consider the row space perspective. Let $r_{\mathbf{A}} = \operatorname{rank}(\mathbf{A})$. The row space of $\mathbf{A}$ has dimension $r_{\mathbf{A}}$.

Each row of $\mathbf{AB}$ is a linear combination of the rows of $\mathbf{B}$, where the coefficients come from the corresponding row of $\mathbf{A}$. Specifically, if $\mathbf{a}_i^{\top}$ denotes the $i$-th row of $\mathbf{A}$ and $\mathbf{b}_j^{\top}$ denotes the $j$-th row of $\mathbf{B}$, then the $i$-th row of $\mathbf{AB}$ is:
\[
(\mathbf{AB})_i^{\top} = \sum_{k=1}^{n} a_{ik} \mathbf{b}_k^{\top}.
\]

Alternatively, we can use the fact that the row rank equals the column rank. By taking transposes:
\[
\operatorname{rank}(\mathbf{AB}) = \operatorname{rank}((\mathbf{AB})^{\top}) = \operatorname{rank}(\mathbf{B}^{\top}\mathbf{A}^{\top}).
\]

Applying Part 1 to $\mathbf{B}^{\top}\mathbf{A}^{\top}$:
\[
\operatorname{rank}(\mathbf{B}^{\top}\mathbf{A}^{\top}) \le \operatorname{rank}(\mathbf{A}^{\top}) = \operatorname{rank}(\mathbf{A}).
\]

Therefore:
\[
\operatorname{rank}(\mathbf{AB}) \le \operatorname{rank}(\mathbf{A}).
\]

Combining both parts, we have:

\begin{align}
\operatorname{rank}(\mathbf{AB}) \le 
\min\bigl\{\operatorname{rank}(\mathbf{A}),\, \operatorname{rank}(\mathbf{B})\bigr\}.
\label{eq:rank-product}
\end{align}

\end{proof}

\paragraph{Application to the Main Analysis in Submatrices}

Connection to Section \ref{sec:TheoreticalAnalysis}: The above lemmas directly support the rank analysis in the main paper:

\begin{enumerate}

\item \textit{Hadamard product structure:} In the unified formulation (Equation \ref{Unified-f}), Linear attention and Mamba mechanisms involve Hadamard products:
\begin{itemize}
\item Mamba: $\mathbf{M} = \mathbf{L_M} \circ (\mathbf{C}^{\top}\mathbf{B})$
\item Linear Attention: $\mathbf{M} = \mathbf{L_{Attn}} \circ (\psi(\mathbf{Q})\psi(\mathbf{K})^{\top})$
\end{itemize}

Lemma~\ref{lem:hadamard_rank} provides the upper bounds:
\begin{equation}
\label{upbound_lm}
\left\{
\begin{aligned}
R_{\text{LinAttn}}^{\text{off}} &\le \operatorname{rank}(\mathbf{L^{\text{off}}_{Attn}}) \cdot \operatorname{rank}(\psi(\mathbf{Q})\psi(\mathbf{K})^{\top}) \\
R_{\text{Mamba}}^{\text{off}} &\le \operatorname{rank}(\mathbf{L^{\text{off}}_M}) \cdot \operatorname{rank}(\mathbf{C}^{\top}\mathbf{B})
\end{aligned}
\right.
\end{equation}

\item \textit{Matrix product bounds}: For the terms $\psi(\mathbf{Q})\psi(\mathbf{K})^{\top}$ and $\mathbf{C}^{\top}\mathbf{B}$ appearing in the above equations, Lemma~\ref{lem:matrix_product_rank} provides:

\begin{equation}
\label{upbound_lm2}
\resizebox{\linewidth}{!}{$
\begin{aligned}
\operatorname{rank}(\psi(\mathbf{Q})\psi(\mathbf{K})^{\top}) 
  &\le \min\big\{\operatorname{rank}(\psi(\mathbf{Q})), 
  \operatorname{rank}(\psi(\mathbf{K})^{\top})\big\} 
  \le D_{QK}, \\[3pt]
\operatorname{rank}(\mathbf{C}^{\top}\mathbf{B}) 
  &\le \min\big\{\operatorname{rank}(\mathbf{C}^{\top}), 
  \operatorname{rank}(\mathbf{B})\big\} 
  \le N.
\end{aligned}
$}
\end{equation}

These bounds are crucial because they show that even before considering the masking operations, the representational capacity is already limited by the dimensionality of the intermediate transformations.

The key distinction between mechanisms arises from the structure of the off-diagonal submatrices. For a $C \times C$ off-diagonal block:

\textit{Linear Attention off-diagonal block}:
\[
\mathbf{L^{\text{off}}_{Attn}} = 
\begin{bmatrix}
1 & 1 & \cdots & 1 \\
1 & 1 & \cdots & 1 \\
\vdots & \vdots & \ddots & \vdots \\
1 & 1 & \cdots & 1
\end{bmatrix}_{C \times C}
\]
This is a fixed matrix of all ones, with $\operatorname{rank}(\mathbf{L^{\text{off}}_{Attn}}) = 1$.

\textit{Mamba off-diagonal block} (with block indices $i > j$):
\resizebox{\linewidth}{!}{$
\mathbf{L^{\text{off}}_M} = 
\begin{bmatrix}
A_{i,j+1} & A_{i,j+2} & \cdots & A_{i,j+C} \\
A_{i+1,j+1} & A_{i+1,j+2} & \cdots & A_{i+1,j+C} \\
\vdots & \vdots & \ddots & \vdots \\
A_{i+C-1,j+1} & A_{i+C-1,j+2} & \cdots & A_{i+C-1,j+C}
\end{bmatrix}_{C \times C}
$}

where $A_{p,q} = \mathbf{A}_p \mathbf{A}_{p-1} \cdots \mathbf{A}_{q+1}$ is a product of learnable parameters, with $\operatorname{rank}(\mathbf{L^{\text{off}}_M}) \ge 1$ and typically much higher.

Combining Equations~\ref{upbound_lm} and~\ref{upbound_lm2}, we obtain:
\begin{align*}
R_{\text{LinAttn}}^{\text{off}} &\le 1 \cdot D_{QK} = D_{QK} \\
R_{\text{Mamba}}^{\text{off}} &\le \operatorname{rank}(\mathbf{L^{\text{off}}_M}) \cdot N
\end{align*}

This establishes the rank hierarchy:
\[
\underbrace{C}_{\substack{\text{Self-Attn} \\ \text{(full rank)}}} > \underbrace{\operatorname{rank}(\mathbf{L^{\text{off}}_M}) \cdot N}_{\substack{\text{Mamba} \\ \text{(learnable mask)}}} > \underbrace{D_{QK}}_{\substack{\text{Linear Attn} \\ \text{(fixed mask)}}}.
\]

\item \textit{Block structure analysis:} The matrix $\mathbf{M}$ in Equation~\eqref{Unified-f} is partitioned into a $\frac{L}{C} \times \frac{L}{C}$ grid of $C \times C$ submatrices. We analyze the rank of diagonal and off-diagonal blocks separately for each mechanism.

\textit{Diagonal blocks:} All three mechanisms achieve full rank due to the lower triangular structure:

\resizebox{\linewidth}{!}{$
\mathbf{L}^{\text{diag}} = \begin{bmatrix}
1 & 0 & 0 & \cdots & 0 \\
\cdot & 1 & 0 & \cdots & 0 \\
\cdot & \cdot & 1 & \cdots & 0 \\
\vdots & \vdots & \vdots & \ddots & \vdots \\
\cdot & \cdot & \cdot & \cdots & 1
\end{bmatrix}_{C \times C}
\text{ with } R^{\text{diag}}_{\text{Self-Attn}} = R^{\text{diag}}_{\text{Mamba}} = R^{\text{diag}}_{\text{LinAttn}} = C
$}

\textit{Off-diagonal blocks:} The key differences emerge in the off-diagonal submatrices:

Self-Attention:
The softmax operation ensures full rank:
\[
R^{\text{off}}_{\text{Self-Attn}} = C.
\]

Mamba:
From Equations~\eqref{upbound_lm} and~\eqref{upbound_lm2}:
\[
R^{\text{off}}_{\text{Mamba}} \le \operatorname{rank}(\mathbf{L^{\text{off}}_M}) \cdot N.
\]

Linear Attention:
From Equations~\eqref{upbound_lm} and~\eqref{upbound_lm2}:
\[
R^{\text{off}}_{\text{LinAttn}} \le 1 \cdot D_{QK} = D_{QK}.
\]

For typical base model settings ($C = 256$, $N = D_{QK} = 64$):
\begin{center}
\resizebox{\linewidth}{!}{
\begin{tabular}{lccc}
\toprule
\textbf{Block Type} & \textbf{Self-Attn} & \textbf{Mamba} & \textbf{Linear Attn} \\
\midrule
Diagonal & $C = 256$ & $C = 256$ & $C = 256$ \\
Off-diagonal & $C = 256$ & $\le \operatorname{rank}(\mathbf{L_M}) \cdot 64$ & $\le 64$ \\
\bottomrule
\end{tabular}
}
\end{center}

This establishes the rank hierarchy for off-diagonal blocks:
\[
\underbrace{256}_{\text{Self-Attn}} > \underbrace{\text{rank}(\mathbf{L^{\text{off}}_M}) \cdot 64}_{\text{Mamba}} > \underbrace{64}_{\text{Linear Attn}}.
\]

Key observations:
\begin{itemize}
\item All mechanisms are equivalent on diagonal blocks ($R^{\text{diag}} = C$).
\item Self-Attention achieves full rank on off-diagonal blocks due to softmax nonlinearity.
\item Mamba's learnable mask $\mathbf{L_M}$ allows $\operatorname{rank}(\mathbf{L^{\text{off}}_M}) \gg 1$ in practice, significantly exceeding Linear Attention's fixed rank-1 mask.
\item Linear Attention is most constrained with $R^{\text{off}}_{\text{LinAttn}} \le D_{QK}$ due to its fixed all-ones mask.
\end{itemize}

\end{enumerate}

\paragraph{Remarks on Practical Implications}

The theoretical framework established here explains why:

\begin{enumerate}
\item \textit{Self-Attention is most expressive:} The softmax operation makes the effective rank of off-diagonal blocks full ($R^{\text{off}} = C$), allowing maximum representational capacity.

\item \textit{Mamba balances efficiency and expressiveness:} The learnable matrix $\mathbf{A}$ in $\mathbf{L_M^{\text{off}}}$ increases $\operatorname{rank}(\mathbf{L_M^{\text{off}}})$, enabling $R^{\text{off}}_{\text{Mamba}}$ to approach $N \cdot \operatorname{rank}(\mathbf{L_M^{\text{off}}})$. Since $N$ scales linearly with computation ($O(LNJ)$), Mamba can achieve higher ranks than Linear Attention while maintaining linear complexity. Moreover, Lemma~\ref{lem:matrix_product_rank} shows that the rank is fundamentally limited by $N$, but this limit is more attainable than in Linear Attention due to the learnable mask.

\item \textit{Linear Attention is most constrained:} The fixed mask $\mathbf{L_{Attn}^{\text{off}}}$ with $\operatorname{rank}(\mathbf{L_{Attn}^{\text{off}}}) = 1$ severely limits $R_{\text{LinAttn}}^{\text{off}} \le D_{QK}$ by Lemma~\ref{lem:hadamard_rank}. Moreover, Linear Attention's $O(LD_{QK}^2)$ complexity restricts $D_{QK}$ growth, further limiting its representational capacity. Lemma~\ref{lem:matrix_product_rank} additionally shows that the product $\psi(\mathbf{Q})\psi(\mathbf{K})^{\top}$ cannot exceed rank $D_{QK}$ regardless of the quality of the feature mappings.
\end{enumerate}

These mathematical foundations rigorously support the empirical observations in the experiments (Section \ref{sec:Experiments}), where feature map quality and downstream task performance follow the predicted hierarchy: ViT $>$ Mamba $>$ Linear ViT.

\end{document}